\definecolor{link color}{HTML}{626572}
\newif\ifpreprint
\titlespacing{\paragraph}{0em}{0ex plus 0.5ex}{1em}
\titlespacing{\subsection}{0em}{1ex plus 0.5ex}{0em}
\newcommand{\figleft}{{\em (Left)}}
\newcommand{\figcenter}{{\em (Center)}}
\newcommand{\figright}{{\em (Right)}}
\def\eqref#1{equation~\ref{#1}}
\def\1{\bm{1}}
\DeclareMathAlphabet{\mathsfit}{\encodingdefault}{\sfdefault}{m}{sl}
\SetMathAlphabet{\mathsfit}{bold}{\encodingdefault}{\sfdefault}{bx}{n}
\def\gA{{\mathcal{A}}}
\def\gB{{\mathcal{B}}}
\def\gE{{\mathcal{E}}}
\def\gN{{\mathcal{N}}}
\pgfplotsset{compat=1.18}
\title{Learning to Assist Humans without Inferring Rewards}
\author{
    Vivek Myers$^{1}$
    \hspace*{1cm}
    Evan Ellis$^{1}$
    \medskip
    \\
    \bf
    Sergey Levine$^{1}$
    \hspace*{.4cm}
    Benjamin Eysenbach$^{2}$
    \hspace*{.4cm}
    Anca Dragan$^{1}$
    \medskip
    \\
    $^{1}$UC Berkeley \hspace*{.6cm} $^{2}$Princeton University
}
\begin{document}
\suppressfloats[t]
\maketitle

\vspace{-1ex}

\begin{abstract}
    Assistive agents should make humans' lives easier.
Classically, such assistance is studied through the lens of inverse reinforcement learning, where an assistive agent (e.g., a chatbot, a robot) infers a human's intention and then selects actions to help the human reach that goal.
This approach requires inferring intentions, which can be difficult in high-dimensional settings.
We build upon prior work that studies assistance through the lens of empowerment: an assistive agent aims to maximize the influence of the human's actions such that they exert a greater control over the environmental outcomes and can solve tasks in fewer steps.
We lift the major limitation of prior work in this area\--scalability to high-dimensional settings\--with contrastive successor representations.
We formally prove that these representations estimate a similar notion of empowerment to that studied by prior work and provide a mechanism for optimizing it.
Empirically, our proposed method outperforms prior methods on synthetic benchmarks, and scales to Overcooked, a cooperative game setting. Theoretically, our work connects ideas from information theory, neuroscience, and reinforcement learning, and charts a path for representations to play a critical role in solving assistive problems.\footnote{\parbox[t]\linewidth{Code: \url{https://github.com/vivekmyers/empowerment_successor_representations}\\Website: \url{https://empowering-humans.github.io}}}

\end{abstract}

\section{Introduction}

\begin{figure}[htb!]
    \centering
    \includegraphics[width=\linewidth]{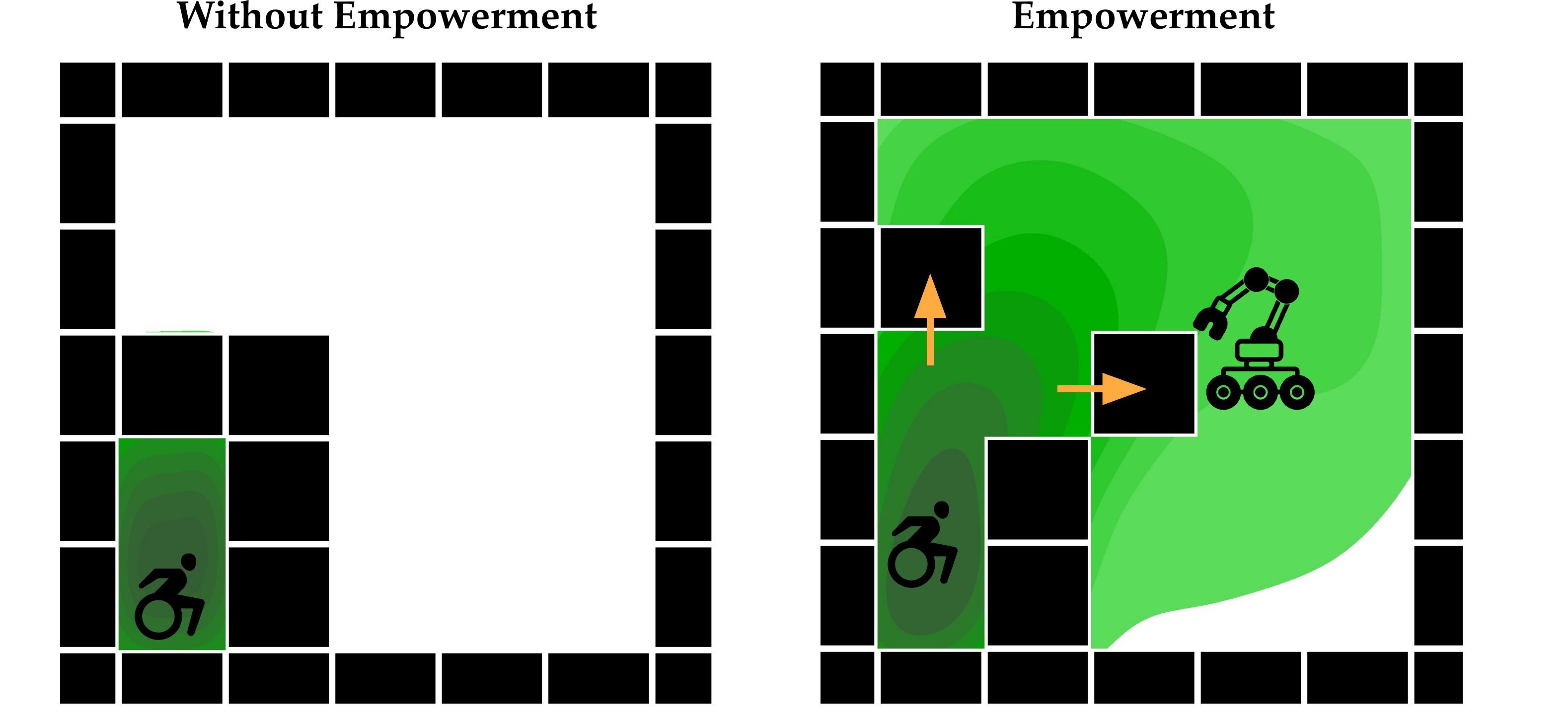}
    \caption{We propose an algorithm for training assistive agents to empower human users \-- the assistant should take actions that enable human users to visit a wide range of future states, and the human's actions should exert a high degree of influence over the future outcomes. Our algorithm scales to high-dimensional settings, opening the door to building assistive agents that need not directly reason about human intentions.}
    \label{fig:front}
\end{figure}

AI agents deployed in the real world should be helpful to humans.
When we know the utility function of the humans an agent could interact with, we can directly train assistive agents through reinforcement learning with the known human objective as the agent's reward.
In practice, agents rarely have direct access to a scalar reward corresponding to human preferences (if such a consistent model even exists) \cite{casper2023open}, and must infer them from human behavior \cite{hadfieldmenell2016cooperative,hadfieldmenell2017inverse}.
This inference can be challenging, as humans may act suboptimally with respect to their stated goals, not know their goals, or have changing preferences \cite{carroll2021estimating}.
Optimizing a misspecified reward function can have poor consequences \cite{turner2023optimal}.

An alternative paradigm for assistance is to train agents that are \textit{intrinsically} motivated to assist humans, rather than directly optimizing a model of their preferences.
An analogy can be drawn to a parent raising a child.
A good parent will empower the child to make impactful decisions and flourish, rather than proscribing an ``optimal'' outcome for the child.
Likewise, AI agents might seek to \textit{empower} the human agents they interact with, maximizing their capacity to change the environment \cite{du2020ave}.
In practice, concrete notions of empowerment can be difficult to optimize as an objective, requiring extensive modeling assumptions that don't scale well to the high-dimensional settings deep reinforcement learning agents are deployed in.

What is a good intrinsic objective for assisting humans that doesn't require these assumptions?
We propose a notion of assistance based on maximizing the influence of the human's actions on the environment.
This approach only requires one structural assumption: the AI agent is interacting with an environment where there is a notion of actions taken by the human agent\--a more general setting than the case where we model the human actions as the outcome of some optimization procedure, as in inverse RL \cite{russell1998learning,arora2021survey} or preference-based RL \cite{wirth2017survey}.

Prior work has studied various objectives for empowerment. For instance, \citet{du2020ave} approximate human empowerment as the variance in the final states of random rollouts. Despite excellent results in certain settings, this approach can be challenging to scale to higher dimensional settings, and does not necessarily enable human users to achieve the goals the want to achieve.
By contrast, our approach exclusively empowers the human with respect to the distribution of (useful) behaviors induced by their current policy (which we term the \emph{effective empowerment}), and can be implemented through a simple objective derived from contrastive successor features, which can then be optimized with scalable deep reinforcement learning (\cref{fig:front}).
We provide a theoretical framework connecting our objective to prior work on empowerment and goal inference, and empirically show that agents trained with this objective can assist humans in the Overcooked environment \cite{carroll2019utility} as well as more complex versions of the obstacle gridworld assistance benchmark proposed by \citet{du2020ave}.

Our core contribution is an objective for training agents that are intrinsically motivated to assist humans without requiring a model of the human's reward function.
The approach, Empowerment via Successor Representations (ESR), maximizes the influence of the human's actions on the environment, and, unlike past approaches for assistance without reward inference, is based on a scalable model-free objective that can be derived from learned successor features that encode which states the human may want to reach given their current action.
By maximizing effective empowerment, our objective empowers the human to reach the desired states, not all states, without assuming a human model.
We analyze this objective in terms of empowerment and goal inference, drawing novel mathematical connections between time-series representations, decision-making, and assistance.
We empirically show that agents trained with our objective can assist humans in two benchmarks proposed by past work: the Overcooked environment \cite{carroll2019utility} and an obstacle-avoidance gridworld \cite{du2020ave}.

\section{Related Work}
\label{sec:background}

Our approach broadly connects ideas from contrastive representation learning and intrinsic motivation to the problem of assisting humans.

\paragraph{Assistive Agents.}

There are two lines of past work on assistive agents that are most relevant.

The first line of work focuses on the setting of an assistance game \cite{hadfieldmenell2016cooperative}, where a robot (AI) agent tries to optimize a human reward of which it is initially unaware.
Practically, inverse reinforcement learning (IRL) can be used in such a setting to infer the human's reward function and assist the human in achieving their goals \cite{hadfieldmenell2017inverse}.
The key challenge with this approach is that it requires modeling the human's reward function.
This can be difficult in practice, especially if the human's behavior is not well-modeled by the reward architecture.
Slightly misspecified reward functions can lead to catastrophic outcomes (i.e., directly harmful behavior in the assistance context) \cite{pan2022effects,tien2023causal,laidlaw2024preventing}.
These concerns have prompted interest, under the umbrella term \textit{AI Alignment}, in ensuring agents are safe and adhere to human values~\citep{amodei2016concrete,ngo2024alignment,taylor2020alignment,zhuang2020consequences}.
Our approach avoids some of these issues by focusing on empowerment, a more general objective that does not require modeling the human's reward function.

The second line of work focuses on empowerment-like objectives for assistance and shared autonomy.
Empowerment generally refers to a measure of an agent's ability to influence the environment \cite{salge2014empowerment,abril2018unified}.
In the context of assistance, \citet{du2020ave} show one such approximation of empowerment (AvE) can be approximated in simple environments through random rollouts to assist humans.
Meanwhile, empowerment-like objectives have been used in shared autonomy settings to assist humans with teleoperation \cite{chen2022asha} and general assistive interfaces \cite{reddy2022first}.
A key limitation of these approaches for general assistance is they only model empowerment over one time step.
Our approach enables a more scalable notion of empowerment that can be computed over multiple time steps.

\paragraph{Intrinsic Motivation.}

Intrinsic motivation broadly refers to agents that accomplish behaviors in the absence of an externally-specified reward or task \cite{barto2013intrinsic}.
Common applications of intrinsic motivation in single-agent reinforcement learning include exploration and skill discovery \cite{aubret2023survey,eysenbach2019diversity,burda2023exploration}, empowerment \cite{abril2018unified,salge2014empowerment}, and surprise minimization \cite{friston2010freeenergy,berseth2019smirl,abril2018unified}.
When applied to settings with humans, these objectives may lead to antisocial behavior \cite{turner2023optimal}.
Our approach applies intrinsic motivation to the setting of assisting humans, where the agent's goal is an empowerment objective\--to maximize the human's ability to change the environment.

\paragraph{Information-theoretic Decision Making.}

Information-theoretic approaches have seen broad applicability across unsupervised reinforcement learning \cite{poole2019variational,abril2018unified,aubret2023survey}.
These methods have been applied to goal-reaching \cite{choi2021variational}, skill discovery \cite{mohamed2015variational,jung2011empowerment,eysenbach2019diversity,laskin2022cic,park2021lipschitz}, and exploration \cite{burda2023exploration,still2012information,nikolov2019information}.
In the context of assisting humans, information-theoretic methods have primarily been used to reason about the human's goals or rewards \cite{biyik2022learning,myers2021learning,houlsby2011bayesian}.

Our approach is made possible by advances in contrastive representation learning for efficient estimation of the mutual information of sequence data \cite{oord2018representation}. While these methods have been widely used for representation learning~\citep{chen2020simple, wu2018unsupervised} and reinforcement learning~\citep{laskin2020curl,eysenbach2022contrastive,dayan1993improving,momennejad2017successor}, to the best of our knowledge prior work has not used these contrastive techniques for learning assistive agents.

\section{The Information Geometry of Empowerment}
\label{sec:approach}

We will first state a general notion of an assistive setting, then show how an empowerment objective based on learned successor representations can be used to assist humans without making assumptions about the human following an underlying reward function. In \cref{sec:experiments}, we provide empirical evidence supporting these claims.

\subsection{Preliminaries}
\label{sec:notation}
Formally, we adapt the notation of \citet{hadfieldmenell2016cooperative}, and assume a ``robot'' (\robot) and ``human'' (\human) policy are training together in an MDP $M=(\S,\Ah,\Ar,R,\dyn,\gamma)$. The states $s$ consist of the joint states of the robot and the human; we do not have separate observations for the human and robot.
At any state $s\in\S$, the robot policy selects actions distributed according to $\pir(\ar\mid s)$ for $\ar\in\Ar$ and the human selects actions from $\pih(\ah\mid s)$ for $\ah\in\Ah$.
The transition dynamics are defined by a distribution $\dyn(s'\mid s,\ah,\ar)$ over the next state $s'\in\S$ given the current state $s\in\S$ and actions $\ah\in\Ah$ and $\ar\in\Ar$, as well as an initial state distribution $\dyn(s_0)$.
For notational convenience, we will additionally define random variables $\rs_{t}$ to represent the state at time $t$, and
$\rar_t\sim \pir(\mdot \mid \rs_{t})$ and $\rah_t\sim \pih(\mdot \mid \rs_{t})$ to represent the human and robot actions at time $t$, respectively.

\paragraph{Effective Empowerment.}
Our work builds on a long line of prior methods that use information theoretic objectives for RL.
Specifically, we adopt \emph{empowerment} as an objective for training an assistive agent~\citep{du2020ave,salge2014empowerment,klyubin2005empowerment}.
This section provides the mathematical foundations for empowerment, as developed in prior work.
We build on the prior work by (1) providing an information geometric interpretation of what empowerment does (\cref{sec:analysis}) and (2) providing a scalable algorithm for estimating and optimizing a notion of empowerment, the \emph{effective empowerment} (\cref{sec:estimating_effective_empowerment}).

The idea behind empowerment is to think about the changes that an agent can effect on a world; an agent is more empowered if its actions effect a larger degree of change over future outcomes.
Following prior work~\citep{choi2021variational,klyubin2005empowerment, salge2014empowerment}, we measure empowerment by looking at how much the actions taken \emph{now} affect outcomes \emph{in the future}.
An agent with a high degree of empowerment exerts a high degree of control of the future states by simply changing its current actions.
Like prior work, we measure this degree of control through the mutual information $I(\sfut; \rah)$ between the current action $\rah$ and the future states $\sfut$.
Note that these future states might occur many time steps into the future.

Empowerment depends on several factors: the environment dynamics, the choice of future actions, the current state, and other agents in the environment.
Different problem settings involve maximizing empowerment using these different factors.
In this work, we study the setting where a ``human'' agent and a ``robot'' agent collaborate in an environment, with the robot aiming to maximize the empowerment of the human.
This problem setting was introduced in prior work~\citep{hadfieldmenell2016cooperative,du2020ave}.
Compared with other mathematical frameworks for learning assistive agents~\citep{reddy2018shared}, framing the problem in terms of empowerment means that the assistive agent need not infer the human's underlying intention, an inference problem that is typically challenging~\citep{ratliff2006maximum,abbeel2004apprenticeship}.

To define our empowerment objective, we introduce the random variable $\sfut$, corresponding to a state sampled $K \sim \operatorname{Geom}(1-\gamma)$ steps into the future under the behavior policies $\pih$ and $\pir$.
We will use $\rho(\sfut \mid s_t)$ to denote the density of this random variable; this density is sometimes referred to as the discounted state occupancy measure.
We will use mutual information to measure how much the action $a_t$ at time $t$ changes this distribution:
\begin{align}
    I(\rah_t; \sfut \mid s_t) \triangleq \E_{s_t, s_{t+k}, \ah_t,\ar_t } \biggl[
    \log \frac{\p(\rs_{t+K} = s_{t+k} \mid \rs_{t} = s_{t}, \rah_{t} =
a_{t})}{\p(\rs_{t+K} = s_{t+k}  \mid \rs_{t} = s_{t})} \biggr]. \label{eq:mi}
\end{align}
Our overall objective is \emph{effective empowerment}, $\emp(\pih,\pir)$: the mutual information between the human's actions and the future states $\sfut$ while interacting with the robot:
\begin{align*}
    \emp(\pih,\pir) & = \E \Bigl[ \sum_{t=0}^\infty \gamma^t I(\rah_t;\sfut  \mid \rs_t) \Bigr]. \eqmark
    \label{eq:empowerment}
\end{align*}
We call this objective \emph{effective empowerment} rather than just \emph{empowerment} to highlight that the mutual information in \cref{eq:empowerment} is computed under the behavior policies $\pih$ and $\pir$.
This is in contrast to much of the prior work on empowerment, which focuses on the highest possible empowerment that could be achieved under some policy, rather than the actual realized empowerment~\citep{klyubin2005empowerment,salge2014empowerment,du2020ave}.
Focusing on the effective empowerment simplifies the problem of estimating empowerment since we only have access to samples from the current behavior policies, not arbitrary more powerful policies.
In later sections, we will see this simplification is both empirically effective and theoretically justified.

Note that this objective resembles an RL objective: we do not just want to maximize this objective greedily at each time step, but rather want the assistive agents to take actions now that help the human agent reach states where it will have high empowerment in the future.

\begin{figure}[htb!]
    \centering
    \begin{subfigure}[t]{0.3\linewidth}
        \includegraphics[width=\linewidth]{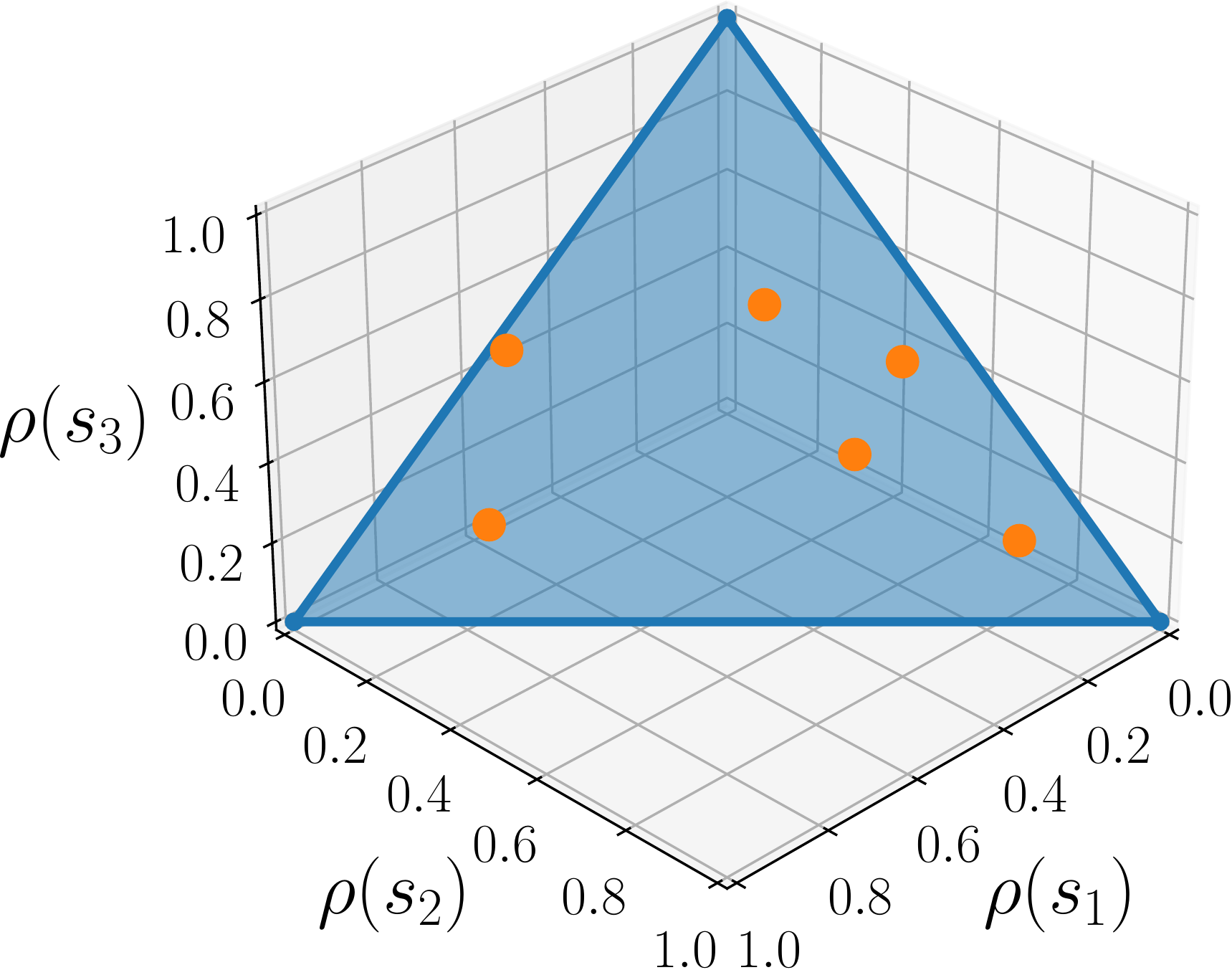}
        \caption{State marginal polytope}
    \end{subfigure}
    \hfil
    \begin{subfigure}[t]{0.3\linewidth}
        \includegraphics[width=\linewidth]{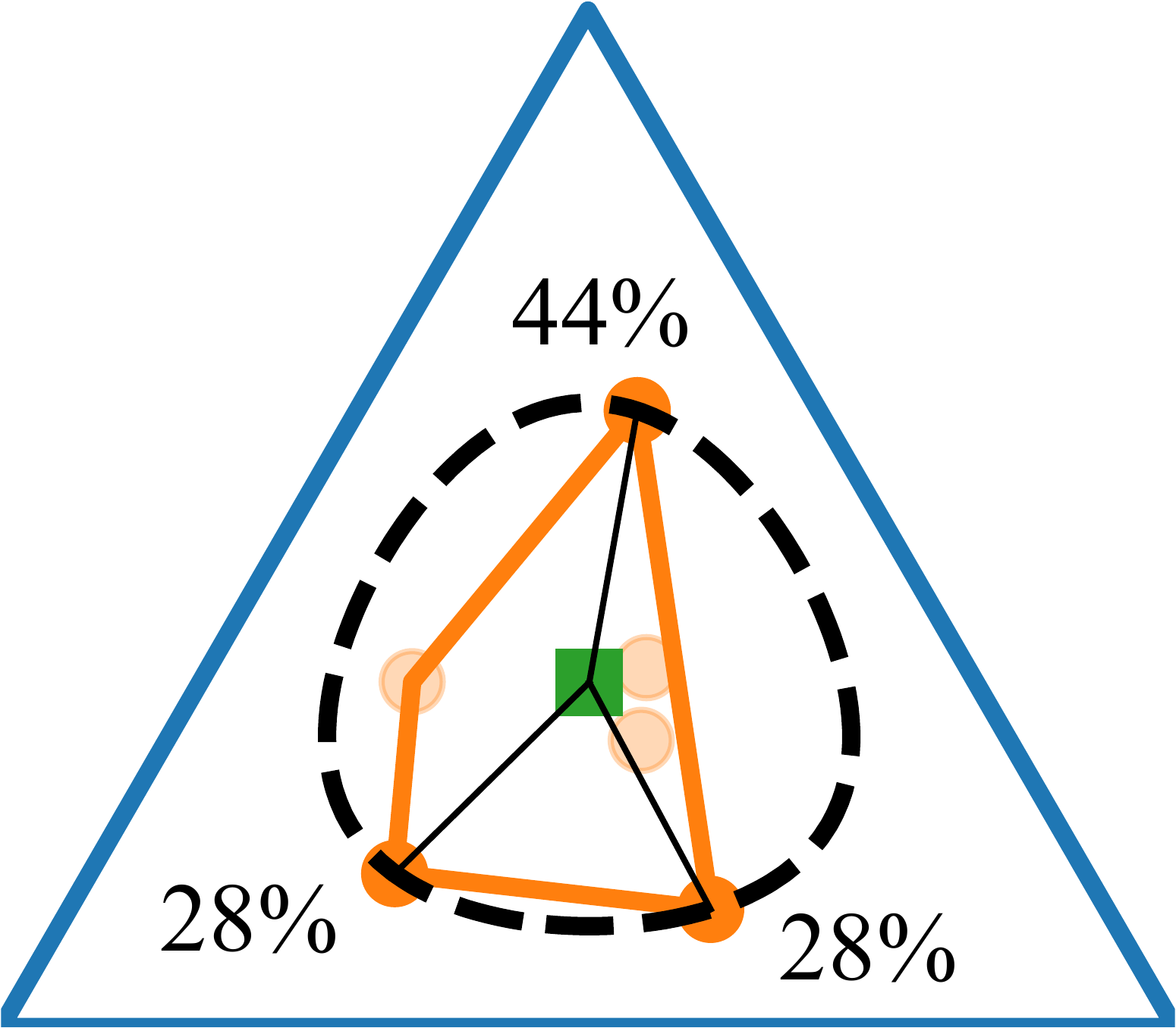}
        \caption{Mutual information}
    \end{subfigure}
    \hfil
    \begin{subfigure}[t]{0.3\linewidth}
        \includegraphics[width=\linewidth]{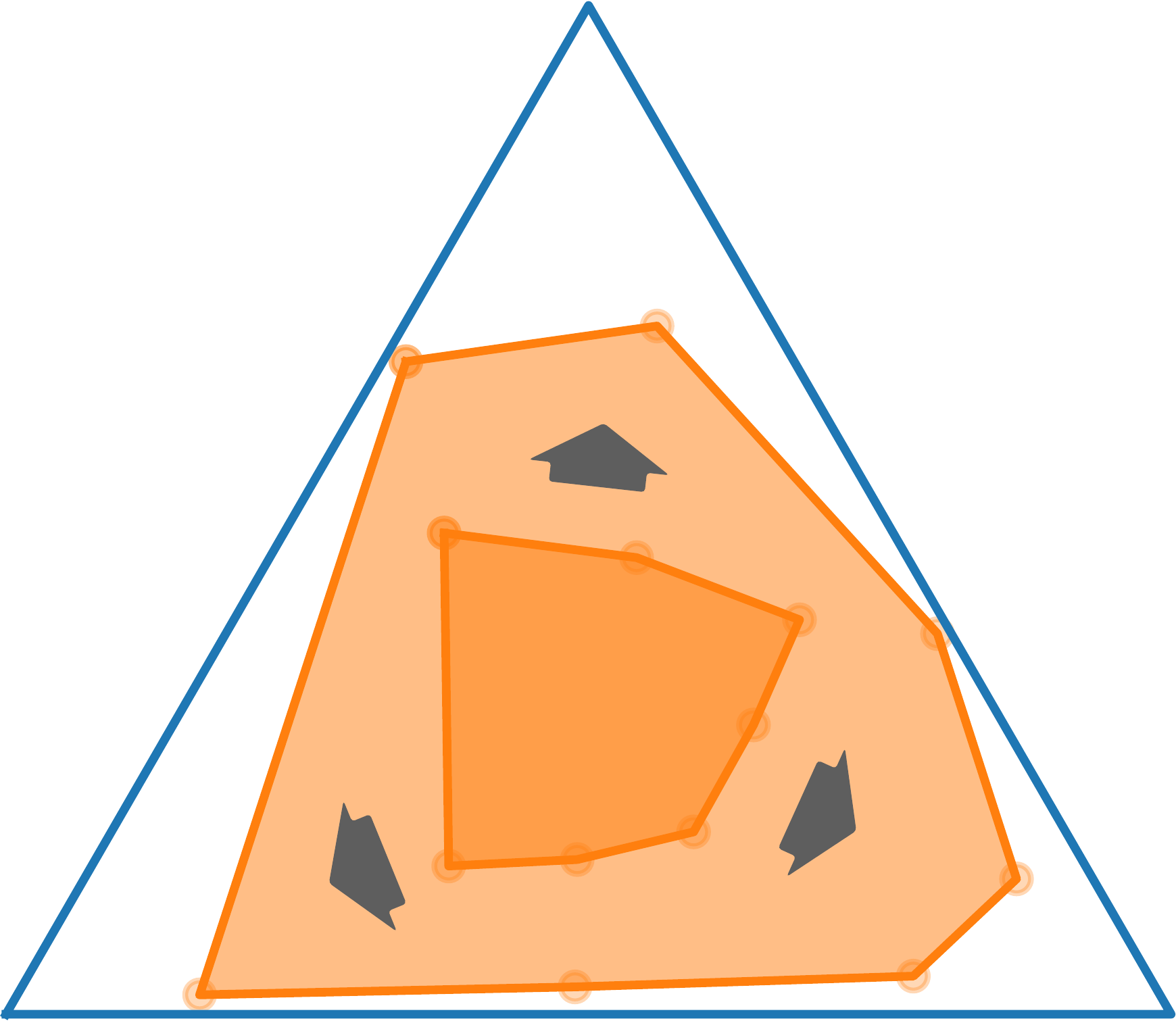}
        \caption{Maximizing empowerment}
    \end{subfigure}
    \caption{\textbf{The Information Geometry of Empowerment}, illustrating the analysis in \cref{sec:analysis}.
        \figleft \, For a given state $s_t$ and assistant policy $\pir$, we plot the distribution over future states for 6 choices of the human policy $\pih$. In a 3-state MDP, we can represent each policy as a vector lying on the 2-dimensional probability simplex. We refer to the set of all possible state distributions as the \emph{state marginal polytope}.
        \figcenter \, Mutual information corresponds to the distance between the center of the polytope and the vertices that are maximally far away.
        \figright \, Empowerment corresponds to maximizing the size of this polytope.
        For example, when an assistive agent moves an obstacle out of a human user's way, the human user can spend more time at desired state.
    }
    \label{fig:info_geom}
    \end{figure}

\subsection{Intuition and Geometry of Empowerment}
Intuitively, the assistive agent should aim to maximize the number of future that can be realized by the human's actions.
We will mathematically quantify this in terms of the discounted state occupancy measure, $\rho^\pi(s^+ \mid s)$.
An agent has a large empowerment if the future states for one action are very different from the future actions after taking a different action; i.e., when $\rho(a_t = a_1; \sfut \mid s_t)$ is quite different from $\rho(a_t \mid s_2; \sfut \mid s_t)$ for actions $a_1 \neq a_2$.
The mutual information (\refas{Eq.}{eq:mi}) quantifies this degree of control: $I(\ra_{t}; \sfut \mid s_t)$.

One way of understanding this mutual information is through \emph{information geometry}~\cite{cover1999elements,ay2017information,ay2017information,nielsen2020elementary}.
For a fixed current state $s_t$, assistant policy $\pir$ and human policy $\pih$, each potential action $a_t$ that the human takes induces a different distribution over future states: $\rho^{\pir,\pih}(\sfut \mid s_t, a_t)$.
We can think about the set of these possible distributions: $\{\rho^{\pir,\pih}(\sfut \mid s_t, a_t) \mid a_t \in \gA\}$.
\Cref{fig:info_geom} \textit{(Left)} visualizes this distribution on a probability simplex for 6 choices of action $a_t$.
If we look at any possible distribution over actions, then this set of possible future distributions becomes a polytope (see orange polygon in \cref{fig:info_geom} \textit{(Center)}).

Intuitively, the mutual information $I(\ra_t; \sfut \mid s_t)$ in our empowerment objective corresponds to the \emph{size} or \emph{volume} of this state marginal polytope.
This intuition can be formalized by using results from information geometry~\citep{eysenbach2022information,gallager1979source,ryabko1979coding}.
The human policy $\pih(a_t \mid s_t)$ places probability mass on the different points in \Cref{fig:info_geom} (Center).
Maximizing the mutual information corresponds to ``picking out'' the state distributions that are maximally spread apart (see probabilities in \cref{fig:info_geom} \textit{(Center)}).
To formalize this, define
\begin{equation}
    \label{eq:rho}
    \rho(\sfut \mid s_t) \triangleq \E_{\pi(a_t \mid s_t)}[\rho(\sfut \mid s_t, a_t)]
\end{equation}
as the \emph{average} state distribution from taking the human's actions (see green square in \cref{fig:info_geom} \textit{(Center)}).
\begin{remark}
    \label{lem:info_geom}
    Mutual information corresponds to the distance between the average state distribution (\refas{Eq.}{eq:rho}) and the furthest achievable state distributions:
    \begin{align}
        \label{eq:distances}
        I(\ra_t; \sfut \mid s_t) = \max_{a_t} D_\text{KL}\bigl(\rho(a_t; \sfut \mid s_t) \bigm\mVert \rho(\sfut \mid s_t)\bigr) \triangleq d_\text{max}.
    \end{align}
\end{remark}
This distance is visualized as the black lines in \cref{fig:info_geom}.
When we talk about the ``size'' of the state marginal polytope, we are specifically referring to the length of these black lines (as measured with a KL divergence).

This sort of mutual information is a way for measuring the degree of control that an agent exerts on an environment.
This measure is well defined for any agent/policy; that agent need not be maximizing mutual information, and could instead be maximizing some arbitrary reward function.
This point is important in our setting: this means that the assistive agent can estimate and maximize the empowerment of the human user \emph{without having to infer what reward function the human is trying to maximize.}

Finally, we come back to our empowerment objective, which is a discounted sum of the mutual information terms that we have been analyzing above.
This empowerment objective says that the human is more empowered when this set has a larger size\--i.e., the human can visit a wider range of future state (distributions).
The empowerment objective says that the assistive agent should act to try to maximize the size of this polytope.
Importantly, this maximization problem is done sequentially: the assistive agent wants the size of this polytope to be large both at the current state and at future states; the human's actions should exert a high degree of influence over the future outcomes both now and in the future.
Thus, our overall objective looks at a sum of these mutual informations.

Not only does this analysis provides a geometric picture for what empowerment is doing, it also lays the groundwork for formally relating empowerment to reward.

\subsection{Relating Empowerment to Reward}
\label{sec:analysis}

In this section we take aim at the question: when humans are well-modeled as optimizing a reward function, when does maximizing effective empowerment help humans maximize their rewards?
Answering this question is important because for empowerment to be a safe and effective assistive objective, it should enable the human to better achieve their goals.
We show that under certain assumptions, empowerment yields a provable lower bound on the average-case reward achieved by the human for suffiently long-horizon empowerment (i.e., $\gamma \to 1$).

For constructing the formal bound, we suppose the human is Boltzmann-rational~\citep{luce1959individual,ziebart2008maximum} with respect to some reward function $R\sim \mathcal{R}$ and rationality coefficient $\beta$.
The distribution $\mathcal{R}$ could be interpreted as a prior over the human's objective, a set of skills the human may try and carry out, or a population of humans with different objectives that the agent could be interacting with.
Our quantity of interest, the average-case reward achieved by the human with our empowerment objective, is given by
\begin{equation}
    \hret(\pih) = \E_{\substack{R\sim\mathcal{R}\\s_0\sim p_0}} \bigl[  V_{R,\gamma}^{\pih,\pir}(s_0) \bigr]
    \label{eq:human_return}
\end{equation}
where $V_{R,\gamma}^{\pih,\pir}(s_0)$ is the value function of the human policy $\pih$ under the reward function $R$ when interacting with $\pir$.
Recalling \cref{eq:empowerment}, we will express the overall effective empowerment objective we are trying to relate to \cref{eq:human_return} as
\begin{equation}
    \mathcal{E}_{\gamma}(\pih,\pir) = \E \Bigl[ \sum\nolimits_{t=0}^\infty
    \gamma^t I(\sfut; \ra\h_t \mid \past) \Bigr].
    \label{eq:empowerment_analysis}
\end{equation}
This notation is formalized in \cref{app:theory}.

The two key assumptions used in our analysis are \cref{assm:uniform}, which states that the human will optimize for behaviors that uniformly cover the state space, and \cref{assm:ergodic}, which simply states that with infinite time, the human will be able to reach any state in the state space.

\begin{restatable}[Skill Coverage]{assumption}{assm:uniform}
    \label{assm:uniform}
    The rewards $R\sim\mathcal{R}$ are uniformly distributed over the scaled $|\S|$-simplex $\Delta^{|\S|}$ such that:\begin{equation}
        \bigl(R+\tfrac{1}{|\S|}\bigr)\bigl(\tfrac{1}{ 1-\gamma }\bigr) \sim \operatorname{Unif}\bigl( \Delta^{|\S|} \bigr) =
        \operatorname{Dirichlet}(\hspace*{.5ex}\underbrace{\hspace*{-.5ex}1,1,\ldots,1\hspace*{-.5ex}}_{\mathclap{\text{$|\S|$ times}}}\hspace*{.5ex}).
        \label{eq:uniform}
    \end{equation}
\end{restatable}

\begin{restatable}[Ergodicity]{assumption}{assm:ergodic}
    \label{assm:ergodic}
    For some $\pih, \pir$, we have
    \begin{equation}
        \p^{\pih,\pir}(\sfut = s \mid s_0) > 0 \quad \text{for all } s \in \S, \gamma \in (0,1).
        \label{eq:ergodic}
    \end{equation}
\end{restatable}

Our main theoretical result is \cref{thm:empowerment_reward}, which shows that under these assumptions, maximizing effective empowerment yields a lower bound on the (squared) average-case reward achieved by the human for sufficiently large $\gamma$.
In other words, for a sufficiently long empowerment horizon, the empowerment objective \cref{eq:empowerment} is a meaningful proxy for reward maximization.

\begin{restatable}{theorem}{thm:empowerment_reward}
    \label{thm:empowerment_reward}
    Under \cref{assm:uniform} and \cref{assm:ergodic}, for sufficiently large $\gamma$ and any $\beta>0$,
    \begin{equation}
        \emp_{\gamma}(\pih,\pir)^{1/2} \le (\beta / e)\,\hret(\pih).
            \label{eq:empowerment_reward}
    \end{equation}
\end{restatable}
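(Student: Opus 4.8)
The plan is to recast both sides of \cref{eq:empowerment_reward} as functionals of the discounted state-occupancy measure $\rho^{\pih,\pir}(\sfut \mid s_0)$ and then to sandwich them: I would lower-bound the human return $\hret(\pih)$ and upper-bound the effective empowerment $\emp_\gamma(\pih,\pir)$ by a common ``spread'' functional of $\rho$, and finally chain the two bounds. Since the value function is linear in the occupancy, $V_{R,\gamma}^{\pih,\pir}(s_0) = \tfrac{1}{1-\gamma}\langle \rho^{\pih,\pir}(\cdot\mid s_0),\, R\rangle$, the return is an inner product, whereas \cref{eq:empowerment} is a discounted sum of mutual-information terms $I(\sfut;\rah_t\mid s_t)$. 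The whole argument would be carried out in the regime $\gamma\to 1$, where \cref{assm:ergodic} guarantees that the far-future occupancy has full support (so all log-ratios are finite) and converges to a stationary distribution.

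First I would lower-bound $\hret(\pih)$. Substituting the reparametrization of \cref{assm:uniform}, namely $R + \tfrac{1}{|\S|}\mathbf{1} = (1-\gamma)\hat{R}$ with $\hat{R}\sim\operatorname{Dirichlet}(1,\dots,1)$, collapses the return to $\E_{\hat{R},s_0}[\langle\rho^{\pih,\pir}(\cdot\mid s_0),\hat{R}\rangle]$ up to an $\hat{R}$-independent baseline. Because $\pih$ is Boltzmann-rational with coefficient $\beta$, the occupancy $\rho$ is positively correlated with the sampled $\hat{R}$; evaluating this symmetric Dirichlet integral shows that the expected collected reward is controlled from below by the spread (entropy) of the reachable occupancies, with the Boltzmann temperature supplying a factor of $1/\beta$. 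This yields a bound in which $\beta\,\hret(\pih)$ is bounded below by the spread functional of $\rho$.

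Next I would upper-bound $\emp_\gamma$. Using \cref{lem:info_geom}, each per-step term equals the maximal divergence $\max_{a}D_{\mathrm{KL}}(\rho(a;\sfut\mid s)\,\|\,\rho(\sfut\mid s))$ between a reachable occupancy and the average one. I would bound this KL by the same spread functional that appears in the reward bound, passing between KL and total variation via Pinsker's inequality and linearizing the logarithmic information term with the elementary inequality $\log x \le x/e$, which is exactly what extracts the constant $1/e$. Summing over the discounted horizon contributes the $\tfrac{1}{1-\gamma}$ factor that must be reconciled against the reward scaling, and the $1/2$ exponent in \cref{eq:empowerment_reward} arises when converting the quadratic (KL/variance) control of the information term into the linear (total-variation) control of the reward gap.

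Finally I would combine the two sandwiched bounds and take $\gamma\to 1$, using \cref{assm:ergodic} to discard the lower-order baseline and mixing terms, to obtain $\emp_\gamma(\pih,\pir)^{1/2}\le(\beta/e)\,\hret(\pih)$. The main obstacle, I expect, is the $\gamma\to 1$ bookkeeping coupled with the reward-prior integral: as $\gamma\to 1$ the per-step mutual information decays (the far-future state becomes insensitive to $\rah_t$ under mixing) while the effective horizon $\tfrac{1}{1-\gamma}$ diverges, and the theorem hinges on these two rates matching. Controlling the correlation between the reward sample $\hat{R}$ and the reward-dependent Boltzmann occupancy $\rho$ cleanly enough to produce the stated constant $\beta/e$ and exponent $1/2$\--rather than a looser instance-dependent constant\--is the delicate step that the asymptotic regime and \cref{assm:uniform} are designed to make tractable.
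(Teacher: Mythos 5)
Your overall strategy (upper-bound the empowerment, lower-bound the return, chain the two) matches the shape of the paper's argument, but the mechanism you propose is different from the paper's and, as written, has a genuine gap at its center. The paper never works with the occupancy measure as an inner product against the Dirichlet-distributed reward, and never needs Pinsker's inequality or a ``spread functional.'' Instead it makes three moves: (i) the chain rule for mutual information collapses the entire discounted sum $\sum_t \gamma^t I(\sfut;\rah_t\mid \past)$ into a single joint mutual information $I(\sfut;\anext\mid\past)$ over a block of actions, so there is no need to match a decaying per-step information rate against the diverging horizon $1/(1-\gamma)$; (ii) under \cref{assm:ergodic}, as $\gamma\to 1$ the far-future state converges to a stationary distribution that depends on the human's actions only through the reward $R$ that generated the Boltzmann policy, so the data processing inequality applied to the Markov chain $\anext \to R \to \lim_{\gamma\to1}\sfut$ gives $I(\sfut;\anext\mid\past)\le I(R;\anext\mid\past)$ (\cref{lem:reward_mi}); (iii) an elementary entropy lower bound for softmax distributions with logits in $[0,1]$ (\cref{lem:entropy_inequality}) yields $I(R;\anext\mid\past)\le\bigl(\tfrac{\beta}{e}\E[ Q_{R,\gamma}^{\pih,\pir}]\bigr)^2\le\bigl(\tfrac{\beta}{e}\hret(\pih)\bigr)^2$ (\cref{lem:random_action_value}), which is where both the constant $\beta/e$ and the exponent $1/2$ actually come from.

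The gap in your proposal is exactly the step you flag as ``delicate'': controlling the correlation between the sampled reward $\hat R$ and the reward-dependent Boltzmann occupancy tightly enough to extract the constant $\beta/e$ and the square root. You give no mechanism for this, and the ingredients you name do not obviously supply one\--Pinsker converts KL to total variation with a constant $\sqrt{1/2}$, not $1/e$, and $\log x\le x/e$ acts on the wrong object (the paper's $1/e$ arises from bounding an $x e^{-x}$-type term inside the softmax entropy computation, not from linearizing the information term). Likewise, the rate-matching between the vanishing per-step mutual information and the $1/(1-\gamma)$ horizon that your plan ``hinges on'' is precisely what the paper's chain-rule step renders unnecessary; attempting it directly would require quantitative mixing rates that neither \cref{assm:ergodic} nor \cref{assm:uniform} provides. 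The essential missing idea is the factorization of the action's influence on the far future through the latent reward variable $R$; without it, your sandwich does not close.
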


The proof is in \cref{app:theory_proof}
To the best of our knowledge, this result provides the first formal link between empowerment maximization and reward maximization.
This motivates us to develop a scalable algorithm for empowerment maximization, which we introduce in the following section.

\section{Estimating and Maximizing Empowerment with Contrastive Representations}
\label{sec:estimation_and_maximization}

Directly computing \cref{eq:empowerment} would require access to the human policy, which we don't have.
Therefore, we want a tractable estimation that still performs well in large environments which are more difficult to model due to the exponentially increasing set of possible future states.
To better-estimate empowerment, we learn contrastive representations that encode information about which future states are likely to be reached from the current state.
These contrastive representations learn to model mutual information between the current state, action, and future state, which we then use to compute the empowerment objective.

\subsection{Estimating Effective Empowerment}
\label{sec:estimating_effective_empowerment}

To estimate this effective empowerment objective, we need a way of learning the probability ratio inside the expectation.
Prior methods such as \citet{du2020ave} and \citet{salge2014empowerment} rollout possible future states and compute a measure of their variance as a proxy for empowerment, however this doesn't scale when the environment becomes complex.
Other methods learn a dynamics model, which also doesn't scale when dynamics become challenging to model \cite{jung2011empowerment}.
Modeling these probabilities directly is challenging in settings with high-dimensional states, so we opt for an indirect approach.
Specifically, we will learn representations that encode two probability ratios.
Then, we will be able to compute the desired probability ratio by combining these other probability ratios.

Our method learns three representations:
\begin{enumerate}[itemsep=3pt,topsep=0pt,parsep=0pt,leftmargin=0.75cm]
    \item $\phi(s,\ar,\ah)$\--This representation can be understood as a sort of latent-space model, predicting the future representation given the current state $s$ and the human's current action $\ah$ as well as the robot's current action $\ar$.
    \item $\phi'(s,\ar)$\--This representation can be understood as an uncontrolled model, predicting the representation of a future state without reference to the current human action $\ah$. This representation is analogous to a value function.
    \item $\psi(s^+)$\--This is a representation of a future state.
\end{enumerate}
We will learn these three representations with two contrastive losses, one that aligns $\phi(s, a^R, a^H) \leftrightarrow \psi(s^+)$ and one that aligns $\phi'(s, \ar) \leftrightarrow \psi(s^+)$
\begin{align*}
    \max_{\phi,\phi',\psi} \mathbb{E}_{\{(s_i, a_i, s_i') \sim \p(\rs_{t},\rah_t,\rs_{t+k})\}_{i=1}^N} & \bigl[
        \infonce(\bigl\{\phi(s_i, a_i)\bigr\}, \{\psi(s_j')\}) + \infonce(\bigl\{\phi'(s_i)\bigr\}, \{\psi(s_j')\}) \bigr], \label{eq:repr_loss}
\end{align*}
where the contrastive loss $\infonce$ is the symmetrized infoNCE objective~\citep{oord2018representation}:
\begin{align}
    \def\i{{\textcolor{text2}{i}}}\def\j{{\textcolor{text1}{j}}}
    \infonce(\{x_i\}, \{y_j\})
    \triangleq \sum_{{\textcolor{text2}{i=1}}}^N \Biggl[ \log \biggl( \frac{e^{x_\i\tr y_\i}}{\sum_{{\color{text1} j=1}}^N e^{x_\i\tr y_{\color{text1}j}}} \biggr) + \log \biggl( \frac{e^{x_\i\tr y_\i}}{\sum_{{ \color{text1}j=1 }}^N e^{x_{\color{text1}j}\tr y_\i}} \biggr) \Biggr].
\end{align}
We have colored the index $j$ for clarity.
At convergence, these representations encode two probability ratios~\citep{poole2019variational}, which we will ultimately be able to use to estimate empowerment (\refas{Eq.}{eq:empowerment}): \begin{align}
    \phi(s,\ar,\ah)\tr\psi(g) & = \log \biggl[\frac{\p(\rs_{t+K} = g \mid \rs_{t} = s, \rah_{t} = \ah, \rar_{t} = \ar)}{C_1 \p(\rs_{t+K} = g)}\biggr]  \label{eq:contrastive_action}
    \\
    \phi'(s,\ar)\tr\psi(g)    & = \log\biggl[ \frac{\p(\rs_{t+K} = s_{t+k} \mid \rs_{t} = s_{t}, \rar_{t} = \ar)}{C_2 \p(\rs_{t+K} = g)}\biggr].
    \label{eq:contrastive}
\end{align}
Note that our definition of empowerment (\refas{Eq.}{eq:empowerment}) is defined in terms of similar probability ratios.
The constants $C_1$ and $C_2$ will mean that our estimate of empowerment may be off by an additive constant, but that constant will not affect the solution to the empowerment maximization problem.

\subsection{Estimating Empowerment with the Learned Representations}
To estimate empowerment, we will look at the difference between these two inner products:
\begin{align*}
     & \phi(s_{t+K}, \ar, \ah)^T \psi(g) - \phi(s_{t+K}, \ar)^T \psi(g)                                                                    \\
     & =  \log \p(s_{t+K} \mid s, \ah) - \log C_1 - \cancel{\log \p(s_{t+K})} - \log \p(s_{t+K} \mid s) + \log C_2 + \cancel{\log \p(s_{t+K})} \\
     & = \log \frac{\p(s_{t+K} \mid s, \ah)}{\p(s_{t+K} \mid s)} + \log \frac{C_2}{C_1}.
\end{align*}
Note that the expected value of the first term is the \emph{conditional} mutual information $I(s_{t+K}; \ah \mid s)$. Our empowerment objective corresponds to averaging this mutual information across all the visited states. In other words, our objective corresponds to an RL problem, where empowerment corresponds to the expected discounted sum of these log ratios:
\begin{align*}
    \gE(\pih, \pir)
     & = \E_{\pih, \pir}\Bigl[\sum_{t=0}^\infty \gamma^t I(s^+; \ah_t \mid s_t) \Bigr]                                                        \\
     & \approx \E_{\pih, \pir}\Bigl[\sum_{t=0}^\infty \gamma^t  (\phi(s_t, \ar, \ah) - \phi(s_t, \ar))^T \psi(g) - \log \frac{C_2}{C_1}\Bigr].
\end{align*}
The approximation above comes from function approximation in learning the Bayes optimal representations.
Again, note that the constants $C_1$ and $C_2$ do not change the optimization problem.
Thus, to maximize empowerment we will apply RL to the assistive agent $\pir(a \mid s)$ using a reward function
\begin{align*}
    r(s, \ar) = (\phi(s_t, \ar, \ah) - \phi(s_t, \ar))^T \psi(g). \eqmark \label{eq:reward}
\end{align*}

\subsection{Algorithm Summary}

\begin{algorithm}
    \caption{Empowerment via Successor Representations (\alg) \label{alg:ours}}
    {\footnotesize
        \begin{algorithmic}
            \State \textbf{Input}: Human policy $\pih(a \mid s)$
            \State Randomly initialize assistive agent policy $\pir(a \mid s)$, and representations $\phi(s, \ar, \ah)$, $\psi(s, a^T)$, and $\psi(g)$.
            \State Initialize replay buffer $\gB$.
            \While{not converged}
            \State Collect a trajectory of experience with human policy and assistive agent policy, store in replay buffer $\gB$.
            \State Update representations  $\phi(s, \ar, \ah)$, $\psi(s, a^T)$, and $\psi(g)$ with the contrastive losses in \cref{eq:repr_loss}.
            \State Update $\pir(a \mid s)$ with RL using reward function $r(s, \ar, \ah) = (\phi(s, \ar, \ah) - \phi'(s, \ar))^T \psi(g)$.
            \EndWhile
            \State \textbf{Return}: Assistive policy $\pir(a \mid s)$.
        \end{algorithmic}}
\end{algorithm}

We propose an actor-critic method for learning the assistive agent.
Our method will alternate between updating these contrastive representations and using them to estimate a reward function (\cref{eq:reward}) that is optimized via RL.
We summarize the algorithm in \cref{alg:ours}.
In practice, we use SAC~\citep{haarnoja2019soft} as our RL algorithm.
In our experiments, we will also study the setting where the human user updates their policy alongside the assistive agent.

\begin{figure}[htb!]
    \centering
    \ignorespaces
    \mbox{
        \includegraphics{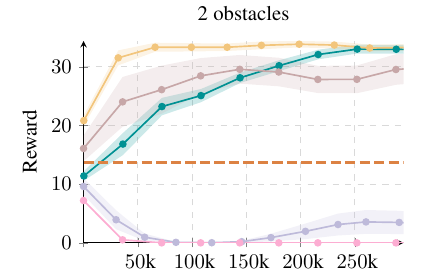}
        \includegraphics{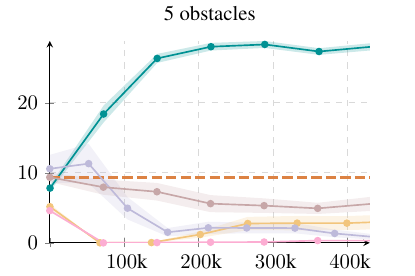}
    }
    \smallskip
    \mbox{
        \includegraphics{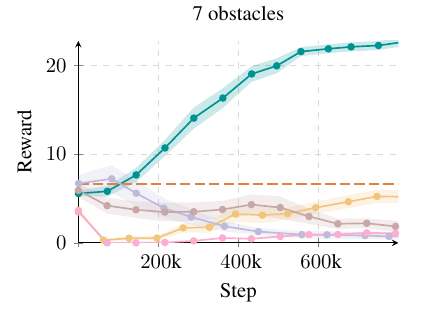}
        \includegraphics{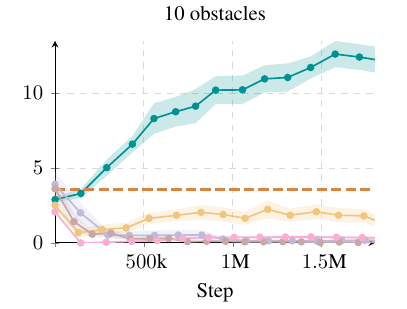}
    }
    \includegraphics{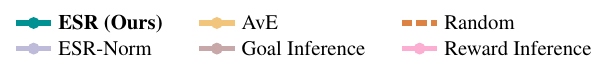}
    \caption{We apply our method to the benchmark proposed in prior work~\citep{du2020ave}, visualized in \cref{fig:gridenv}. The four subplots show variant tasks of increasing complexity (more blocks), ($\pm 1$ SE). We compare against AvE~\cite{du2020ave}, the Goal Inference baseline from~\cite{du2020ave} which assumes access to a world model, and Reward Inference~\cite{garg2021iq} where we recover the reward from a learned q-value. These prior approaches fail on all except the easiest task, highlighting the importance of scalability.}
    \label{fig:grid}
\end{figure}

\section{Experiments}
\label{sec:experiments}

We seek to answer two questions with our experiments. \emph{First}, does our approach enable assistance in standard cooperation benchmarks? \emph{Second}, does our approach scale to harder benchmarks where prior methods fail?

Our experiments will use two benchmarks designed by prior work to study assistance: the obstacle gridworld~\citep{du2020ave} and  Overcooked~\citep{carroll2019utility}.
Our main \textbf{baseline} is AvE~\citep{du2020ave}, a prior empowerment-based method.
Our conjecture is that both methods will perform well on the lower-dimensional gridworld task, and that our method will scale more gracefully to the higher dimensional Overcooked environment.
We will also compare against a na\"ive baseline where the assistive agent acts randomly.

\begin{figure}[htb!]
    \centering
    \ignorespaces
    \newdimen\height\height=3.9cm
    \begin{subfigure}{0.31\textwidth}
        \centering
        \includegraphics[height=\height]{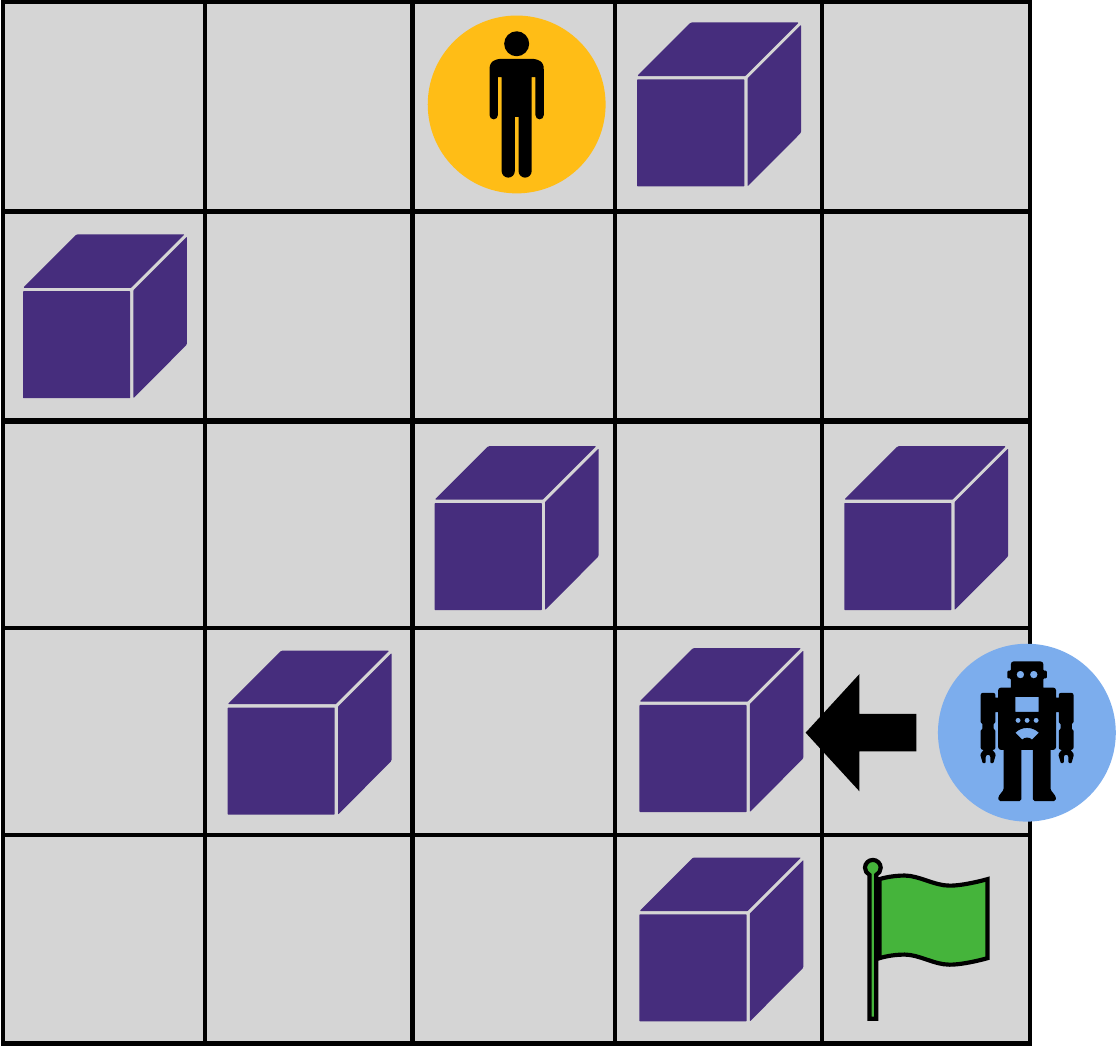}
        \caption{Obstacle Gridworld}
        \label{fig:gridenv}
    \end{subfigure}
    \kern -5px
    \begin{subfigure}{0.38\textwidth}
        \centering
        \includegraphics[height=\height]{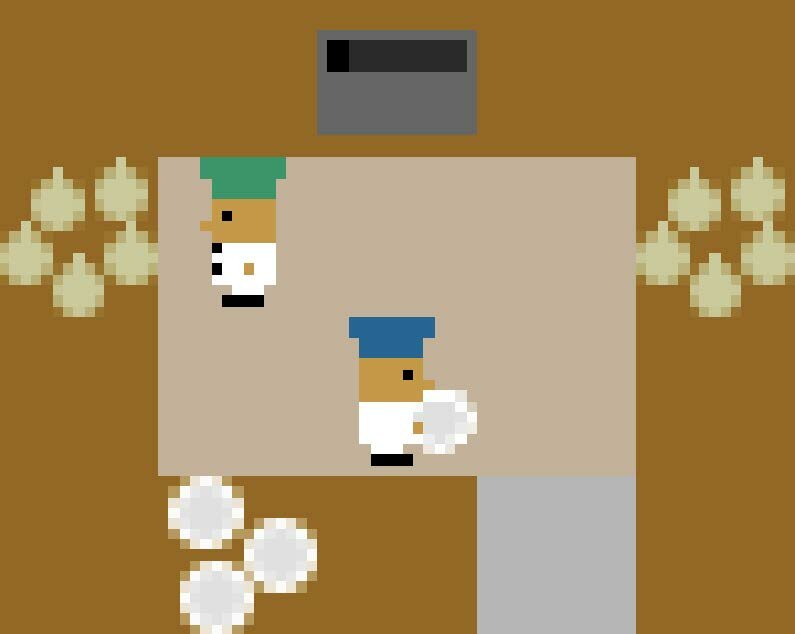}
        \caption{Cramped Room}
        \label{fig:cramped_layout}
    \end{subfigure}
    \kern -5px
    \begin{subfigure}{0.31\textwidth}
        \centering
        \includegraphics[height=\height]{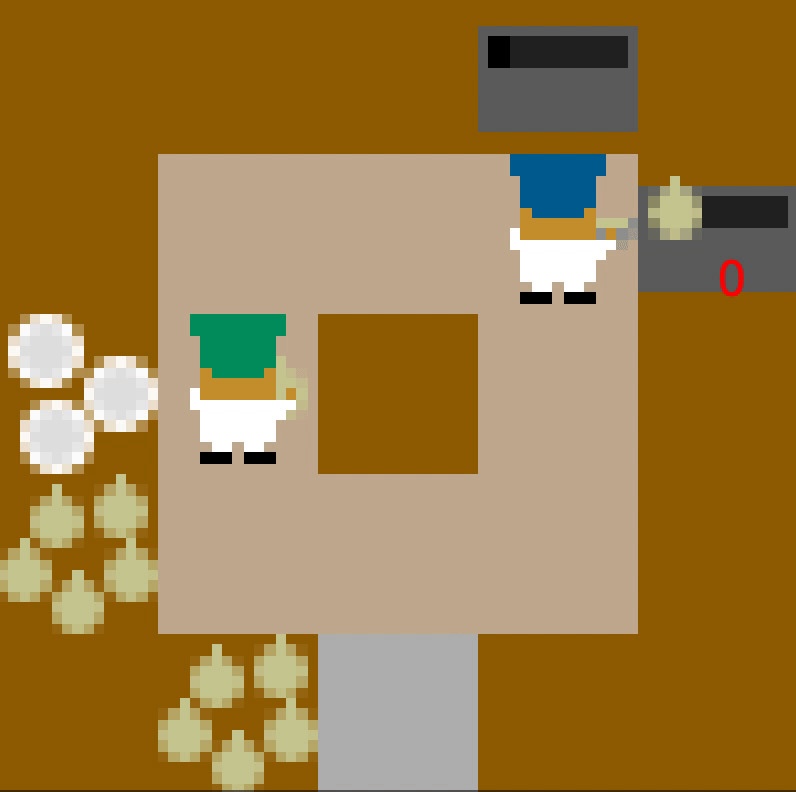}
        \caption{Coordination Ring}
        \label{fig:coord_layout}
    \end{subfigure}
    \caption{\emph{(a)} The modified environment from \citet{du2020ave} scaled to $N=7$ blocks, and \emph{(b, c)} the two layouts of the Overcooked environment \cite{carroll2019utility}.}
    \label{fig:envs}
\end{figure}

\subsection{Do contrastive successor representations effectively estimate empowerment?}

We test our approach in the assistance benchmark suggested in \citet{du2020ave}. The human (orange) is tasked with reaching a goal state (green) while avoiding the obstacles (purple). The AI assistant can move blocks one step at a time in any direction \cite{du2020ave}.
While the original benchmark used $N=2$ obstacles, we will additionally evaluate on harder versions of this task with $N = 5, 7, 10$ obstacles.
We show results in \cref{fig:grid}. On the easiest task, both our method and AvE achieve similar asymptotic reward, though our method learns more slowly than AvE. However, on the tasks with moderate and high degrees of complexity, our approach (\alg{}) achieves significantly higher rewards than AvE, which performs worse than a random controller. These experiments support our claim that contrastive successor representations provide an effective means for estimating empowerment, and hint that \alg{} might be well suited for solving higher dimensional tasks.

\begin{figure}[htb!]
    \centering
    \includegraphics[width=\linewidth]{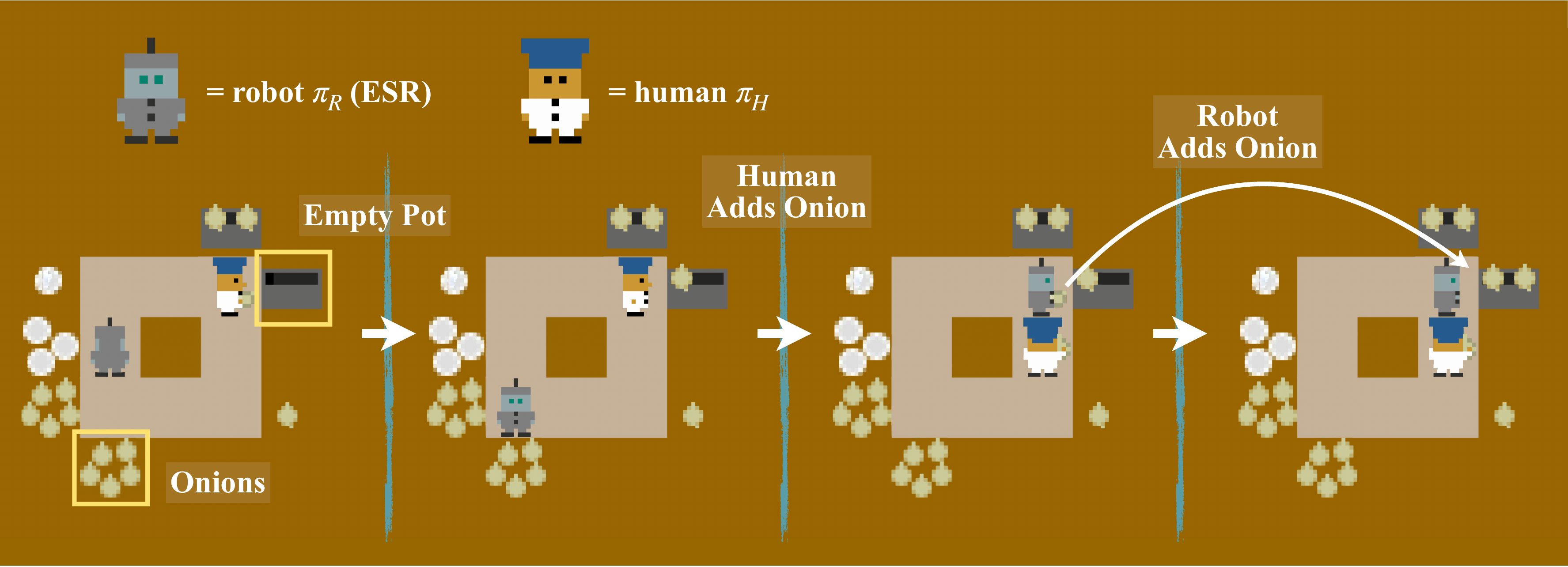}
    \caption{In Coordination Ring, our \alg~agent learns to wait for the human to add an onion to the pot, and then adds one itself. There is another pot at the top which is nearly full, but the empowerment agent takes actions to maximize the impact of the human's actions, and so follows the lead of the human by filling the empty pot.}
    \label{fig:coord_ring_example}
    \end{figure}

\subsection{Does our approach scale to tasks with image-based observations?}

Our second set of experiments look at scaling \alg{} to the image-based Overcooked environment. Since contrastive learning is often applied to image domains, we conjectured that \alg{} would scale gracefully to this setting. We will evaluate our approach in assisting a human policy trained with behavioral cloning taken from \citet{laidlaw2022boltzmann}.
The human prepares dishes by picking up ingredients and cooking them on a stove, while the AI assistant moves ingredients and dishes around the kitchen.
We focus on two environments within this setting: a cramped room where the human must pass ingredients and dishes through a narrow corridor, and a coordination ring where the human must pass ingredients and dishes around a ring-shaped kitchen (\cref{fig:cramped_layout,fig:coord_layout}).
As before, we compare with AvE as well as a na\"ive random controller.
We report results in \cref{tab:overcooked_results}.
On both tasks, we observe that our approach achieves higher rewards than AvE baseline, which performs no better than a random controller.
In \cref{fig:coord_ring_example}, we show an example of one of the collaborative behaviors learned by \alg.
Taken together with the results in the previous setting, these results highlight the scalability of \alg{} to higher dimensional problems.

\begin{table}[htb]
    \caption{Overcooked Results}
    \label{tab:overcooked_results}
    \centering
    \begin{tabular}{l|cccc}
        \toprule
        Layout                & \textbf{\alg{} (Ours)}       & Reward Inference     & AvE              & Random  \\
        \midrule
        Asymmetric Advantages & $\mathbf{72.00 \pm 5.37}$ & $60.33 \pm 0.26$ & $36.71 \pm 1.71$ & $59.36$ \\
        Coordination Ring     & $\mathbf{8.40 \pm 0.69}$  & $5.96 \pm 0.20$  & $5.69 \pm 0.93$  & $6.02$  \\
        Cramped Room          & $\mathbf{91.33 \pm 4.08}$ & $39.24 \pm 0.35$ & $5.13 \pm 1.31$  & $69.26$ \\
        \bottomrule
    \end{tabular}
\end{table}

\section{Discussion}

One of the most important problems in AI today is equipping AI agents with the capacity to assist humans achieve their goals. While much of the prior work in this area requires inferring the human's intention, our work builds on prior work in studying how an assistive agent can \emph{empower} a human user without inferring their intention. Relative to prior methods, we demonstrate how empowerment can be readily estimated using contrastive learning, paving the way for deploying these techniques on high-dimensional problems.

\paragraph{Limitations.}
One of the main limitations of our approach is the assumption that the assistive agent has access to the human's actions, which could be challenging to observe in practice. Automatically inferring the human's actions remains an important problem for future work. A second limitation is that the method is currently an on-policy method, in the sense that the assistive agent has to learn by trial and error. A third limitation is that the ESR formulation assumes that both agents share the same state space. In many cases the empowerment objective will still lead to desirable behavior, however, care must be taken in cases where the agent can restrict the information in its own observations, which could lead to reward hacking. Finally, our experiments do not test our method against real humans, whose policies may differ from the simulated policies. In the future, we plan to investigate techniques from off-policy evaluation and cooperative game theory to enable faster learning of assistive agents with fewer trials. We also plan to test the ESR objective in environments with partial observability over the human's state.

\paragraph{Safety risks.}

Perhaps the main risk involved with maximizing empowerment is that it may be at odds with a human's agents goal, especially in contexts where the pursuit of that goal limits the human's capacity to pursue other goals. For example, a family choosing to have a kid has many fewer options over where they can travel for vacation, yet we do not want assistive agents to stymie families from having children.

One key consideration is \emph{whom} should be empowered. The present paper assumes there is a single human agent. Equivalently, this can be seen as maximizing the empowerment of all exogenous agents. However, it is easy to adapt the proposed method to maximize the empowerment of a single target individual. Given historical inequities in the distribution of power, practitioners must take care when considering whose empowerment to maximize.
Similarly, while we focused on \emph{maximizing} empowerment, it is trivial to change the sign so that an ``assistive'' agent minimizes empowerment. One could imagine using such a tool in policies to handicap one's political opponents.

\paragraph{Acknowledgments.}
We would like to thank Micah Carroll and Cameron Allen for their helpful feedback, as well as Niklas Lauffer for suggesting JaxMARL.
We especially thank the fantastic NeurIPS reviewers for their constructive comments and suggestions.
This research was partly supported by ARL DCIST CRA W911NF-17-2-0181 and ONR N00014-22-1-2773, as well as NSF HCC 2310757, the Jump Cocosys Center, Princeton Research Computing, and the DoD through the NDSEG Fellowship Program.

\bibsep=\smallskipamount
\bibliographystyle{custom}
\bibliography{references}

\ifpreprint\else
\newpage
\fi

\appendix

\section{Experimental Details}
\label{app:details}

We ran all our experiments on NVIDIA RTX A6000 GPUs with 48GB of memory within an internal cluster.
Each evaluation seed took around 5-10 hours to complete.
Our losses (\refas{Eqs.}{eq:repr_loss} and \ref{eq:reward}) were computed and optimized in JAX with Adam \cite{kingma2017adam}.
We used a hardware-accelerated version of the Overcooked environment from the JaxMARL package \cite{rutherford2024jaxmarl}.
The experimental results described in \cref{sec:experiments} were obtained by averaging over 5 seeds for the Overcooked coordination ring layout, 15 for the cramped room layout, and 20 for the obstacle gridworld environment.
Specific hyperparameter values can be found in our code, which is available at \url{https://github.com/vivekmyers/empowerment_successor_representations}.

\subsection{Network Architecture}
In the obstacle grid environment, we used a network with 2 convolutional and 2 fully connected layers and SiLU activations. In Overcooked, we adapted the policy architecture from past work \cite{carroll2021estimating}, using 3 convolutional layers followed by 4 MLP layers with Leaky ReLU activations \cite{maas2013rectifier}. We concatenate in $\ar$ and $\ah$ to the state as one-hot encoded channels, i.e. if the action is 5, 6 additional channels will be concatenated to the state with all set to 0s except the 5th channel which is set to 1s.

\begingroup
\section{Theoretical Analysis of Empowerment}
\def\sfut{\rs_{\scriptscriptstyle +}^{\smash\gamma}}
\def\limmy{\lim_{\smash{\gamma \to 1}}}
\label{app:theory}
To connect our effective empowerment objective to reward, we will extend the notation in \cref{sec:notation} to include a distribution over possible tasks the human might be trying to solve, $\mathcal{R}$, such that each $R\sim \mathcal{R}$ defines a distinct reward function $R:\S\to \R$.
We assume $\pir$ tries to maximize the $\gamma$-discounted effective empowerment of the human, defined as
\begin{equation}
    \mathcal{E}_{\gamma}(\pih,\pir) = \E \Bigl[ \sum\nolimits_{t=0}^\infty
    \gamma^t I(\sfut; \ra\h_t \mid \past) \Bigr]
    \tag{\refas{Eq.}{eq:empowerment_analysis}}
\end{equation}
for
\begin{equation}
    \sfut \triangleq \bigl\{\rs_{k} \quad \text{for}\quad k\sim \operatorname{Geom}(1-\gamma)\bigr\}.
\end{equation}
We additionally define $\spast$ to be the full history of states up to time $t$ and $\apast$ to be the full history of human actions up to time $t$, \begin{align*}
    \spast &= \{\rs_i\}_{i=0}^{t},  \\
    \apast &= \{\ra\h_i\}_{i=0}^{t}.  \eqmark\label{eq:past_action_sequence}
\end{align*}
Then, $\past$ is the full history of states and past human actions up to time $t$,
\begin{equation} \\
    \past = \spast \cup \apast[t-1].
    \label{eq:past_state_action}
\end{equation}
Note that the definition of empowerment in \cref{eq:empowerment_analysis} differs slightly from the original construction \cref{eq:empowerment}\--we condition on the full history of human actions, not just the most recent one.
This distinction becomes irrelevant in practice if our MDP maintains history in the state, in which case we can equivalently use $\rs_t$ in place of $\past$.

Meanwhile, for any fixed $\pir$ and $\beta>0$, the human is Boltzmann-rational with respect to the robot's policy:
\begin{equation}
    \pih(a\h_t \mid \past)
    \propto \exp\bigl(\beta Q_{R,\gamma}^{\pih,\pir}(s_t, a\h_t)\bigr) \label{eq:human_policy}
\end{equation}
\begin{equation}
    \text{where} \quad Q_{R,\gamma}^{\pih,\pir}(s_t, a\h_t)
    = \E \Bigl[ \sum\nolimits_{k=0}^\infty \gamma^k R(s_{t+k}) \mathrel{\Bigm\vert} s_t, a\h_t \Bigr].
    \label{eq:expected_return}
\end{equation}

Equivalently, we can define the human's (soft) Q-function and value as
\begin{align}
    Q_{R,\gamma}^{\pih,\pir}(s_t, a\h_t)
    &= R(s_t) + \gamma \E \Bigl[ V_{R,\gamma}^{\pih,\pir}(s_{t+1}) \mathrel{\Bigm\vert} s_t, a\h_t \Bigr]
    \nonumber\\
    \text{for }
    V_{R,\gamma}^{\pih,\pir}(s_{t})
    &= \E \Bigl[ R(s_{t}) + \gamma R(s_{t+1}) + \gamma^2 R(s_{t+2}) + \ldots \mathrel{\Bigm\vert} s_t,
    a\h_t \Bigr].
    \label{eq:expected_discounted_reward}
\end{align}
The overall human objective is to maximize the expected soft value:
\begin{equation*}
    \hret(\pih) = \E_{\substack{R\sim\mathcal{R}\\s_0\sim p_0}} \bigl[  V_{R,\gamma}^{\pih,\pir}(s_0) \bigr].
    \tag{\refas{Eq.}{eq:human_return}}
\end{equation*}

Note that this definition of $\pih$ depends on $R$ and $\pir$ and is bounded $0\le \hret(\pih) \le 1$.
As in the CIRL setting \cite{hadfieldmenell2016cooperative}, we assume robot is unable to access the true human reward $R:\S\to \R$.
One way to think of the robot's task is as finding a Nash equilibrium between the objectives \cref{eq:empowerment_analysis} and the human best response in \cref{eq:human_policy}.

For convenience, we will also define a multistep version of $Q_{R,\gamma}^{\pih,\pir}$,
\begin{equation}
    Q_{R,\gamma}^{\pih,\pir}(s_t, a\h_t, \ldots, a\h_{t+K})
    =  \E \Bigl[ \sum\nolimits_{k=0}^\infty \gamma^k R(\rs_{t+k}) \mathrel{\Bigm\vert} s_t, a\h_t, \ldots, a\h_{t+K} \Bigr].
\end{equation}

\subsection{Connecting Empowerment to Reward}
\label{app:theory_proof}

Our approach will be to first relate the empowerment (influence of $\ra\h_t$ on  $\sfut$) to the mutual information between $\ra\h_t$ and the reward $R$.

Then, we will connect this quantity to a notion of ``advantage'' for the human (\refas{Eq.}{eq:mi_action_value}), which in turn can be related to the expected reward under the human's policy.
In its simplest form, this argument will require an assumption over the reward distribution:
\csuse{assm:uniform}*
In other words, \cref{assm:uniform} says our prior over the human's reward function is uniform with zero mean.
This is not the only prior for which this argument works, but for general $\mathcal{R}$ we will need a correction term to incentivize states that are more likely across the distribution of $\mathcal{R}$.
Another way to view \cref{assm:uniform} is that the human is trying to execute diverse ``skills'' $z \sim \operatorname{Unif}(\Delta^{|\S|})$.

We also assume ergodicity (\cref{assm:ergodic}).
In the special case of an MDP that resets to some distribution with full support over $\S$, this assumption is automatically satisfied.

\csuse{assm:ergodic}*

Our main result connects empowerment directly to a (lower bound on) the human's expected reward.
\csuse{thm:empowerment_reward}*
\Cref{thm:empowerment_reward} says that for a long enough horizon (i.e., $\gamma$ close to 1), the robot's empowerment objective will lower bound the (squared, MaxEnt) human objective.

We make use of the following lemmas in the proof.

\begin{lemma}
    \label{lem:reward_mi}
    For $t \sim \operatorname{Geom}(1-\gamma)$ and any $K\ge0$,
    \begin{equation}
        \limy I(\sfut; \anext \mid \past) \le I(R; \anext \mid \past).
        \label{eq:reward_mi}
    \end{equation}
\end{lemma}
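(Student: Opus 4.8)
The plan is to reduce the inequality to a conditional data-processing statement for the Markov chain $\anext \to R \to \sfut$ (conditioned on $\past$) that holds in the limit $\gamma \to 1$. The intuition is that the human's next block of actions $\anext$ influences the far-future state $\sfut$ only through the reward $R$ that generates them: once $R$ is fixed, the specific early actions wash out of the long-run state distribution, and this is precisely where \cref{assm:ergodic} enters.

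Rather than assert an exact Markov property at finite $\gamma$, I would start from the exact chain-rule identity obtained by expanding $I(\anext; \sfut, R \mid \past)$ two ways (conditioning on $\sfut$ first, then on $R$ first):
\begin{equation*}
    I(\sfut; \anext \mid \past) + I(R; \anext \mid \sfut, \past) = I(R; \anext \mid \past) + I(\sfut; \anext \mid R, \past).
\end{equation*}
Discarding the nonnegative term $I(R; \anext \mid \sfut, \past) \ge 0$ gives, for every $\gamma \in (0,1)$,
\begin{equation*}
    I(\sfut; \anext \mid \past) \le I(R; \anext \mid \past) + I(\sfut; \anext \mid R, \past),
\end{equation*}
so it suffices to show that the error term $I(\sfut; \anext \mid R, \past)$ vanishes as $\gamma \to 1$.

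The crux, and the step I expect to be the main obstacle, is establishing $\limy I(\sfut; \anext \mid R, \past) = 0$, i.e.\ that $\sfut$ becomes conditionally independent of $\anext$ given $(R, \past)$. I would argue as follows: conditioned on $R$, the joint policy $(\pih, \pir)$ is fixed, since $\pir$ does not depend on $R$ and $\pih$ depends on it only through the Boltzmann form; by \cref{assm:ergodic} this induces a fixed ergodic chain on $\S$ with a unique stationary distribution $d_R$. Because $\sfut$ is the state at time $k \sim \operatorname{Geom}(1-\gamma)$, its law is the $\gamma$-discounted occupancy measure, and a standard Abelian/Tauberian limit shows this occupancy converges to $d_R$ as $\gamma \to 1$, depending only on $R$ and not on the initial segment (neither $\past$ nor $\anext$). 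Writing the error term as an expected KL divergence, $I(\sfut; \anext \mid R, \past) = \E\bigl[\operatorname{KL}(\p(\sfut \mid R, \past, \anext) \,\|\, \p(\sfut \mid R, \past))\bigr]$, each conditional law converges to the same $d_R$, so every inner divergence tends to $0$; finiteness of $\S$ bounds the integrand and lets dominated convergence pass the limit inside. The delicate point is that the geometric weighting must let the mixing tail dominate the small-$k$ regime where $\anext$ genuinely affects the state: this holds because $\p(K = k) = (1-\gamma)\gamma^k \to 0$ pointwise while the mass escapes to $k \to \infty$.

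Combining the two steps and taking $\limy$ of the displayed bound yields $\limy I(\sfut; \anext \mid \past) \le \limy I(R; \anext \mid \past)$, which is the claimed inequality (with the right-hand side understood at the limiting policies, the quantity that feeds into the subsequent reward bound). I would flag that ergodicity and finiteness of $\S$ are used only in the Tauberian convergence of the occupancy measure, and that no exact Markov structure is needed at finite horizon.
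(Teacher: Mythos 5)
Your proof is correct and follows essentially the same route as the paper's: both arguments rest on \cref{assm:ergodic} forcing the discounted occupancy to depend on the trajectory only through $R$ as $\gamma \to 1$, combined with data processing along the chain $\anext \to R \to \sfut$. Your chain-rule decomposition with the vanishing residual $I(\sfut; \anext \mid R, \past)$ is just the conditional data-processing inequality unrolled, and is if anything a more careful handling of the limit than the paper's direct assertion of the Markov property at $\gamma = 1$.
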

\begin{proof}
    For sufficiently large $\gamma$, $\sfut$ will approach the stationary distribution of  $\p^{\pih,\pir}$ for a fixed $\pih,\pir$, irrespective of $\past$ and $\anext$ from \cref{assm:ergodic}.
    So,
    \begin{equation}
        \limy I(\sfut; \anext \mid \past) \le I\Bigl(\lim_{\gamma\to \infty}\sfut\mathbin{;}\anext \mathrel{\Bigm\vert} \past\Bigr)  \\
        \label{eq:information_limit}
    \end{equation}
    Since each $R,\pir,\gamma$ defines a human policy $\pih$ via \cref{eq:human_policy}, we can express the dependencies as the following Markov chain:
    \begin{equation}
        \begin{tikzcd}
            \hat{\ra}_{t} \rar & R \rar & \lim\limits_{\gamma\to 1}\sfut  .
        \end{tikzcd}
        \label{eq:markov_chain}
    \end{equation}
    Applying the data processing inequality~\citep{cover1999elements}, we get \begin{equation}
        I\Bigl(\lim_{\gamma\to \infty}\sfut\mathbin{;}\anext \mathrel{\Bigm\vert} \past\Bigr)\le I\bigl(R; \anext \mid \past\bigr),
    \end{equation}
    from which \cref{eq:reward_mi} follows.
        \end{proof}

\begin{lemma}
    \label{lem:entropy_inequality}
    Suppose we have $k$ logits, denoted by the map $\alpha:\{1\ldots k\}\to [0,1]$.
    For any $\beta>0$, we can construct the (softmax) distribution \[
        p_{\beta}(i) \propto \exp\bigl(\beta \alpha(i)\bigr).
    \]
            Then,
    \begin{equation}
        \H(p_{\beta})  \ge \log k - \Bigl( \frac{\beta}{e} \Bigr)^2 .
        \label{eq:entropy_inequality}
    \end{equation}
\end{lemma}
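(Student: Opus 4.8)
The plan is to recognize that the entropy deficit $\log k - \H(p_\beta)$ is exactly the KL divergence of $p_\beta$ from the uniform distribution $u$ on $\{1,\ldots,k\}$: indeed $\KL(p_\beta \Vert u) = \sum_i p_\beta(i)\log\bigl(k\,p_\beta(i)\bigr) = \log k - \H(p_\beta)$. So the claim is equivalent to the bound $\KL(p_\beta \Vert u) \le (\beta/e)^2$. First I would write $\H(p_\beta) = \log Z(\beta) - \beta\,\E_{p_\beta}[\alpha]$, where $Z(\beta) = \sum_i \exp(\beta\alpha(i))$ is the partition function, and note that at $\beta = 0$ the distribution is uniform with $\H = \log k$, so the deficit vanishes at the left endpoint.

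Next, I would treat $p_\beta$ as a one-parameter exponential family with natural parameter $\beta$ and differentiate the entropy in $\beta$. The standard exponential-family identities give $\frac{d}{d\beta}\log Z(\beta) = \E_{p_\beta}[\alpha]$ and $\frac{d}{d\beta}\E_{p_\beta}[\alpha] = \Var_{p_\beta}(\alpha)$, from which a one-line computation yields $\frac{d}{d\beta}\H(p_\beta) = \E_{p_\beta}[\alpha] - \E_{p_\beta}[\alpha] - \beta\,\Var_{p_\beta}(\alpha) = -\beta\,\Var_{p_\beta}(\alpha)$. Integrating from $0$ to $\beta$ and using $\H(p_0)=\log k$ then expresses the deficit cleanly as $\log k - \H(p_\beta) = \int_0^\beta s\,\Var_{p_s}(\alpha)\,ds$.

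The final step is to bound the variance uniformly. Since the logits take values in the bounded interval $[0,1]$, Popoviciu's inequality gives $\Var_{p_s}(\alpha) \le \tfrac14$ for every $s$, so the integral is at most $\int_0^\beta \tfrac{s}{4}\,ds = \beta^2/8$. Because $e^2 \approx 7.39 \le 8$, we have $\beta^2/8 \le (\beta/e)^2$, which yields the stated bound.

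As for difficulty: the derivation is essentially mechanical once the exponential-family viewpoint is adopted, and the single genuine inequality is the variance bound, which is immediate from boundedness of the logits. The only real subtlety is the constant-chasing at the end \-- the natural argument produces the sharper constant $1/8$, and one simply observes $e^2 \le 8$ to recover the advertised (slightly weaker) constant $1/e^2$. The appearance of $e$ is therefore cosmetic; the proof in fact establishes something marginally stronger than \cref{eq:entropy_inequality}.
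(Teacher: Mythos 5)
Your proof is correct, and it takes a genuinely different route from the paper's. The paper argues by explicitly computing the entropy for the single configuration $\alpha = (1,0,\ldots,0)$, which it asserts is the worst case, obtaining $\H(p_\beta) = \log(k+e^\beta-1) - \beta e^\beta/(k+e^\beta-1)$ and then bounding this below by $\log k - (\beta/e)^2$. You instead write the deficit as $\KL(p_\beta \,\Vert\, u) = \log k - \H(p_\beta) = \int_0^\beta s\,\Var_{p_s}(\alpha)\,ds$ via the exponential-family identity $\tfrac{d}{d\beta}\H(p_\beta) = -\beta\,\Var_{p_\beta}(\alpha)$, and then bound the variance uniformly by Popoviciu's inequality; every step checks out, and you land on the sharper constant $\beta^2/8 \le (\beta/e)^2$. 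Beyond being sharper, your route buys something substantive: it requires no identification of an extremal configuration. The paper's reduction to $\alpha = (1,0,\ldots,0)$ is asserted without justification, and that configuration is in fact not entropy-minimizing in general\--splitting the logits evenly between the values $0$ and $1$ gives $\Var_{p_0}(\alpha) = 1/4$ and hence a deficit of order $\beta^2/8$ near $\beta = 0$, whereas the single-spike configuration has $\Var_{p_0}(\alpha) = \tfrac{1}{k}(1-\tfrac{1}{k})$, which is much smaller for large $k$. Your uniform variance bound covers all logit configurations simultaneously, so it closes this gap in the paper's argument while establishing a marginally stronger form of \cref{eq:entropy_inequality}.
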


\begin{proof}
    We lower bound the ``worst-case'' of the RHS, $\alpha = (1,0,\ldots,0)$:
    \begin{align*}
        \H(p_{\beta})
        &= \frac{(1-n) \log \bigl(\frac{1}{k+e^\beta-1}\bigr)}{k+e^\beta-1}-\frac{e^\beta \log
        \bigl(\frac{e^\beta}{k+e^\beta-1}\bigr)}{k+e^\beta-1}
        \nonumber\\
        &= \frac{(k+e^\beta -1) \log (k+e^\beta -1)-e^\beta \log (e^\beta)}{k+e^\beta -1} \\
        &= \log (k+e^\beta -1)-\frac{e^\beta \log (e^\beta)}{k+e^\beta -1} \\
        &\ge \log k - (\beta/e)^2. \eqmark
    \end{align*}
\end{proof}

\begin{lemma}
    For any $t$ and $K\ge0$,
    \label{lem:random_action_value}
    \begin{equation}
        I(R; \anext \mid \past)\le \limmy \Bigl(\frac{\beta}{e}\E \bigl[ Q_{R,\gamma}^{\pih,\pir}(s_t,\anext) \bigr]\Bigr)^2.
        \label{eq:mi_action_value}
    \end{equation}
\end{lemma}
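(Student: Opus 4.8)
The plan is to control the mutual information $I(R; \anext \mid \past)$ by writing it as a difference of entropies and bounding each term separately. First I would decompose
\begin{equation*}
    I(R; \anext \mid \past) = \H(\anext \mid \past) - \H(\anext \mid R, \past),
\end{equation*}
and upper bound the first term by the entropy of the uniform distribution over the finitely many human action sequences, $\H(\anext \mid \past) \le \log k$, where $k$ is the number of possible sequences $\anext$. This is the only place the size of the action space enters.

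The second step is to lower bound $\H(\anext \mid R, \past)$ using \cref{lem:entropy_inequality}. The key observation is that, conditioned on a fixed reward $R$ and history $\past$, the human action sequence follows the Boltzmann policy $\pih(\anext \mid \past) \propto \exp\bigl(\beta Q_{R,\gamma}^{\pih,\pir}(s_t, \anext)\bigr)$ of \cref{eq:human_policy} — exactly a softmax whose logits are the (multistep) Q-values. After rescaling these logits into $[0,1]$ by their maximum $M_R = \max_{\anext} Q_{R,\gamma}^{\pih,\pir}(s_t, \anext)$ and absorbing the scale into $\beta$, \cref{lem:entropy_inequality} gives $\H(\anext \mid R, \past) \ge \log k - (\beta M_R / e)^2$. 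Combining with the first step and averaging over $R \sim \mathcal{R}$ cancels the $\log k$ terms and yields
\begin{equation*}
    I(R; \anext \mid \past) \le \E_{R}\Bigl[\bigl(\tfrac{\beta}{e}\, M_R\bigr)^2\Bigr].
\end{equation*}

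The final and hardest step is to replace the maximum over action sequences with the expectation in the statement, which is where the $\gamma \to 1$ limit and both assumptions enter. By \cref{assm:ergodic}, as $\gamma \to 1$ the future-state variable $\sfut$ converges to the stationary distribution of $\p^{\pih,\pir}$ independently of the conditioning sequence $\anext$ — the same fact used in \cref{lem:reward_mi}. Consequently the multistep Q-values become asymptotically action-independent, so $\limmy M_R = \limmy \E[Q_{R,\gamma}^{\pih,\pir}(s_t, \anext)]$, while \cref{assm:uniform}, through the $1/(1-\gamma)$ rescaling of the reward, keeps the Q-values in $[0,1]$ so that the limit is finite and the normalization is consistent. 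Passing to the limit then converts the bound above into the claimed inequality.

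The main obstacle I expect is precisely this last step: making rigorous the sense in which the per-action Q-values collapse onto a common value under ergodicity, and correctly interchanging $\limmy$ with the max, the expectation over $\anext$, and the expectation over $R$ (the Jensen gap between $\E_R[(\max_{\anext} Q_R)^2]$ and $(\E Q)^2$ is only closed in the limit, where every Q-value tends to the same normalized constant). I would also need to check that the convergence of $\sfut$ to stationarity is uniform enough over action sequences to justify this exchange, and to handle the degenerate rescaling case $M_R = 0$ when applying \cref{lem:entropy_inequality}.
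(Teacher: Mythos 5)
Your first two steps coincide with the paper's proof: the same decomposition $I(R;\anext\mid\past)=\H(\anext\mid\past)-\H(\anext\mid R,\past)$, the same uniform upper bound $\H(\anext\mid\past)\le\log(K|\A|)$, and the same use of \cref{lem:entropy_inequality} applied to the Boltzmann distribution over action sequences whose logits are the multistep Q-values. The genuine gap is in your final step. After rescaling the logits by $M_R=\max_{\anext}Q_{R,\gamma}^{\pih,\pir}(s_t,\anext)$ you arrive at $I(R;\anext\mid\past)\le\E_R\bigl[(\beta M_R/e)^2\bigr]$, and you must then pass to $\bigl(\tfrac{\beta}{e}\E[Q_{R,\gamma}^{\pih,\pir}(s_t,\anext)]\bigr)^2$. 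Both required conversions point the wrong way for an upper bound: $M_R\ge\E_{\anext}\bigl[Q_{R,\gamma}^{\pih,\pir}(s_t,\anext)\bigr]$ (max versus mean), and $\E_R[M_R^2]\ge(\E_R[M_R])^2$ (Jensen). Ergodicity in the limit $\gamma\to1$ can plausibly collapse the Q-values across \emph{action sequences}, which addresses the first issue, but it does nothing to collapse them across \emph{reward functions}: distinct $R\sim\mathcal{R}$ have distinct stationary expected rewards, so $M_R$ remains a nondegenerate random variable in the limit and the Jensen gap over $R$ does not close. As written, your intermediate bound is strictly weaker than the claimed one and cannot be converted into it.

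The paper's route avoids both conversions and needs no limit argument at this stage. It introduces a uniformly random action sequence $\hat{\ra}\h_t,\ldots,\hat{\ra}\h_{t+K}\sim\operatorname{Unif}(\A\h)$ and reads off from \cref{lem:entropy_inequality}, using only $Q_{R,\gamma}^{\pih,\pir}(\cdot)\le1$ rather than a rescaling by $M_R$, an entropy deficit of the form $\bigl(\tfrac{\beta}{e}\E\bigl[Q_{R,\gamma}^{\pih,\pir}(s_t,\anext)-Q_{R,\gamma}^{\pih,\pir}(s_t,\hat{\ra}\h_t,\ldots,\hat{\ra}\h_{t+K})\bigr]\bigr)^2$. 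It then discards the subtracted term via $0\le Q_{R,\gamma}^{\pih,\pir}(s_t,\hat{\ra}\h_t,\ldots,\hat{\ra}\h_{t+K})$, which only enlarges the bound, and obtains \cref{eq:mi_action_value} directly. No maximum over action sequences ever appears, so neither the max-to-mean step nor the interchange of the square with $\E_R$ is needed. To repair your argument you should replace the max-normalization by this comparison against a random action sequence (or, at minimum, keep the expectation over $\anext$ in place of $M_R$ when invoking the entropy inequality); the ergodicity machinery you invoke belongs to \cref{lem:reward_mi}, not here.
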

\begin{proof}
    Denote by $\hat{\ra}_{t}\h\ldots \hat{\ra}_{t+K} \sim \operatorname{Unif}( \A\h )$ a sequence of $K$ random actions.
    From \cref{lem:entropy_inequality}:
    \begin{align*}
        I(R; &\anext \mid \past) = \H(\anext \mid \past) - \H(\anext \mid R, \past) \\
        &\le \log\bigl(K|\A|\bigr) - \H\bigl(\anext \mathrel{\bigm\vert} R, \past\bigr) \\
        &\le \limmy\Bigl( \frac{\beta}{e}\E \bigl[ Q_{R,\gamma}^{\pih,\pir}(s_t,\anext)
            - Q_{R,\gamma}^{\pih,\pir}(s_t, \hat{\ra}\h_t,\ldots,
            \hat{\ra}\h_{t+K})\bigr]\Bigr)^2,
            \eqmark\label{eq:random_diff}
    \end{align*}
    where the last inequality follows from \cref{lem:entropy_inequality} and $Q_{R,\gamma}^{\pih,\pir}(\ldots) \le 1.$
    We also have
    \begin{equation}
        0 \le Q_{R,\gamma}^{\pih,\pir}(s_t,\hat{\ra}\h_t, \ldots, \hat{\ra}\h_{t+K}) \le Q_{R,\gamma}^{\pih,\pir}(s_t,\anext) \le 1,
    \end{equation}
    which lets us conclude from \cref{eq:random_diff} that
    \begin{equation}
        I(R; \anext \mid \past) \le \Bigl(\frac{\beta}{e}\E \bigl[ Q_{R,\gamma}^{\pih,\pir}(s_t,\anext) \bigr]\Bigr)^2 \tag{\refas{Eq.}{eq:mi_action_value}}.
    \end{equation}
\end{proof}

We can now prove \cref{thm:empowerment_reward} directly by combining \cref{lem:reward_mi,lem:random_action_value}.

\begin{proof}[Proof of \cref{thm:empowerment_reward}]
    Simplifying the limit in \cref{eq:empowerment_reward}, we get
    \begin{align}
        \limy\emp_{\gamma}(\pih,\pir)
        &\le \limy \Bigl(\sum\nolimits_{t=0}^\infty \gamma^t I(\sfut; \ra\h_t \mid \past) \Bigr)\nonumber\\
        &\le \limy I(\sfut; \anext \mid \past) \tag{chain rule}\\
        &\le I(R; \anext \mid \past)  \tag{\cref{lem:reward_mi}}\\
        &\le \limmy \Bigl(\frac{\beta}{e}\E \bigl[ Q_{R,\gamma}^{\pih,\pir}(s_t,\anext) \bigr]\Bigr)^2 \tag{\cref{lem:random_action_value}}\\
        &\le\limmy \biggl(\frac{\beta\hret(\pih)}{e} \biggr)^2.
    \end{align}

    It follows that for sufficiently large $\gamma$,
    \begin{equation}
        \emp_{\gamma}(\pih,\pir)^{1/2} \le (\beta / e)\,\hret(\pih) \tag{\refas{Eq.}{eq:empowerment_reward}}.
    \end{equation}
\end{proof}

\begin{figure}
    \centering
    \ignorespaces
    \mbox{
        \includegraphics{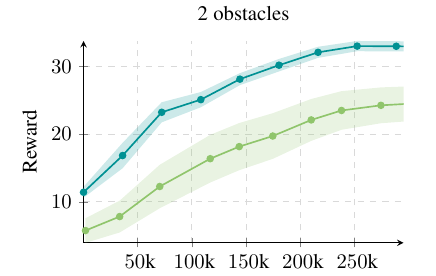}
        \includegraphics{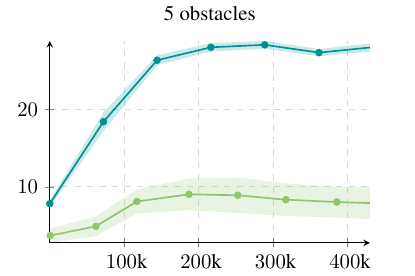}
    }
    \smallskip
    \mbox{
        \includegraphics{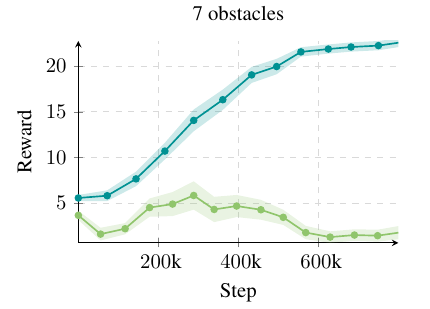}
        \includegraphics{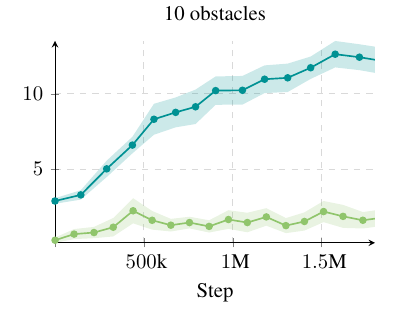}
    }
    \includegraphics{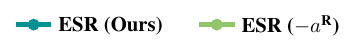}
    \caption{We evaluate our method with and without conditioning on the robot action $\ar$. Conditioning aids learning significantly, which we theorize is because it removes uncertainty in the classification.}
    \label{fig:gridnoar}
\end{figure}

\begin{figure}[htb!]
    \centering
    \ignorespaces
    \mbox{
        \includegraphics{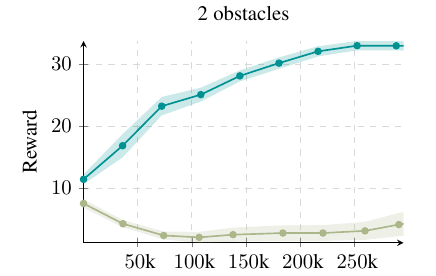}
        \includegraphics{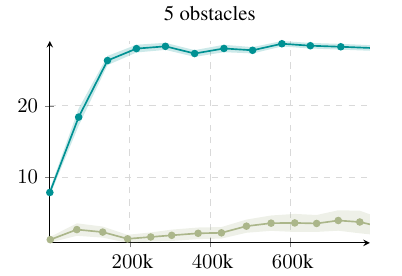}
    }
    \smallskip
    \mbox{
        \includegraphics{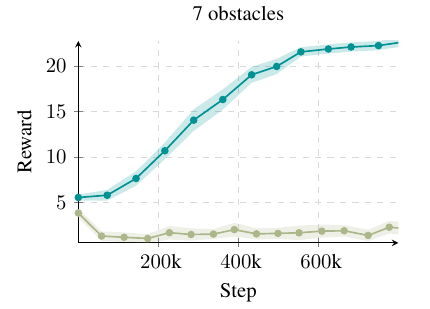}
        \includegraphics{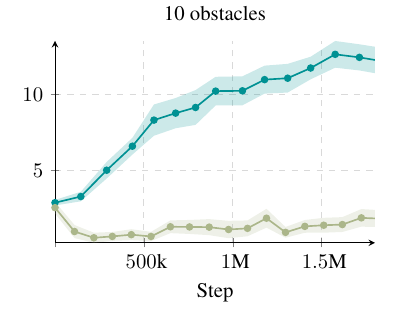}
    }
    \includegraphics{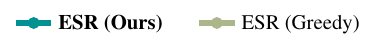}
    \caption{We compare a greedy policy ($\gamma = 0$) against our standard policy ($\gamma = 0.9$).}
    \label{fig:gridgreedy}
\end{figure}
\begin{figure}
    \ignorespaces
    \centering
    \includegraphics[width=.7\textwidth]{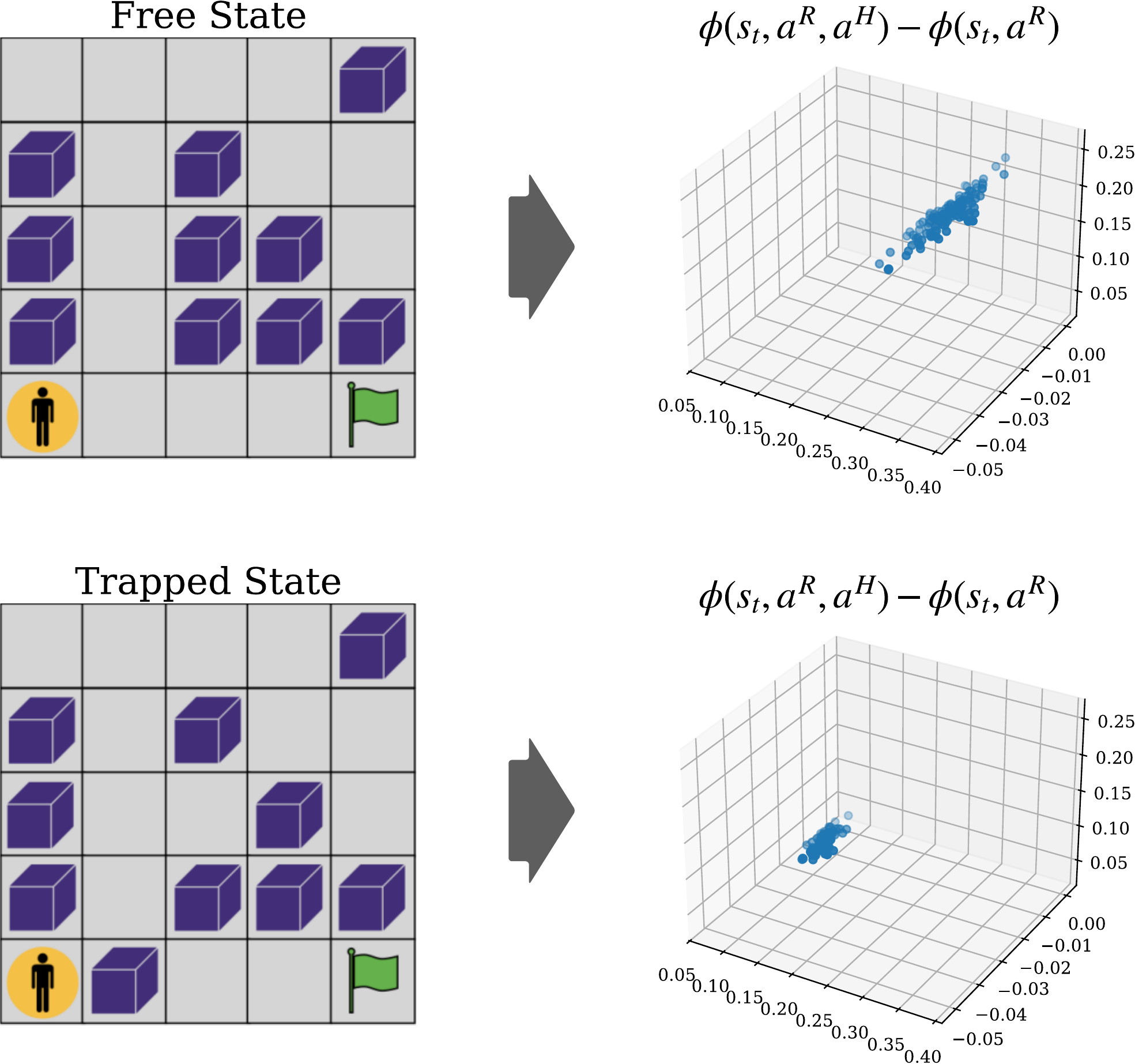}
    \par
    \vspace*{4ex}
    \includegraphics[width=.9\textwidth]{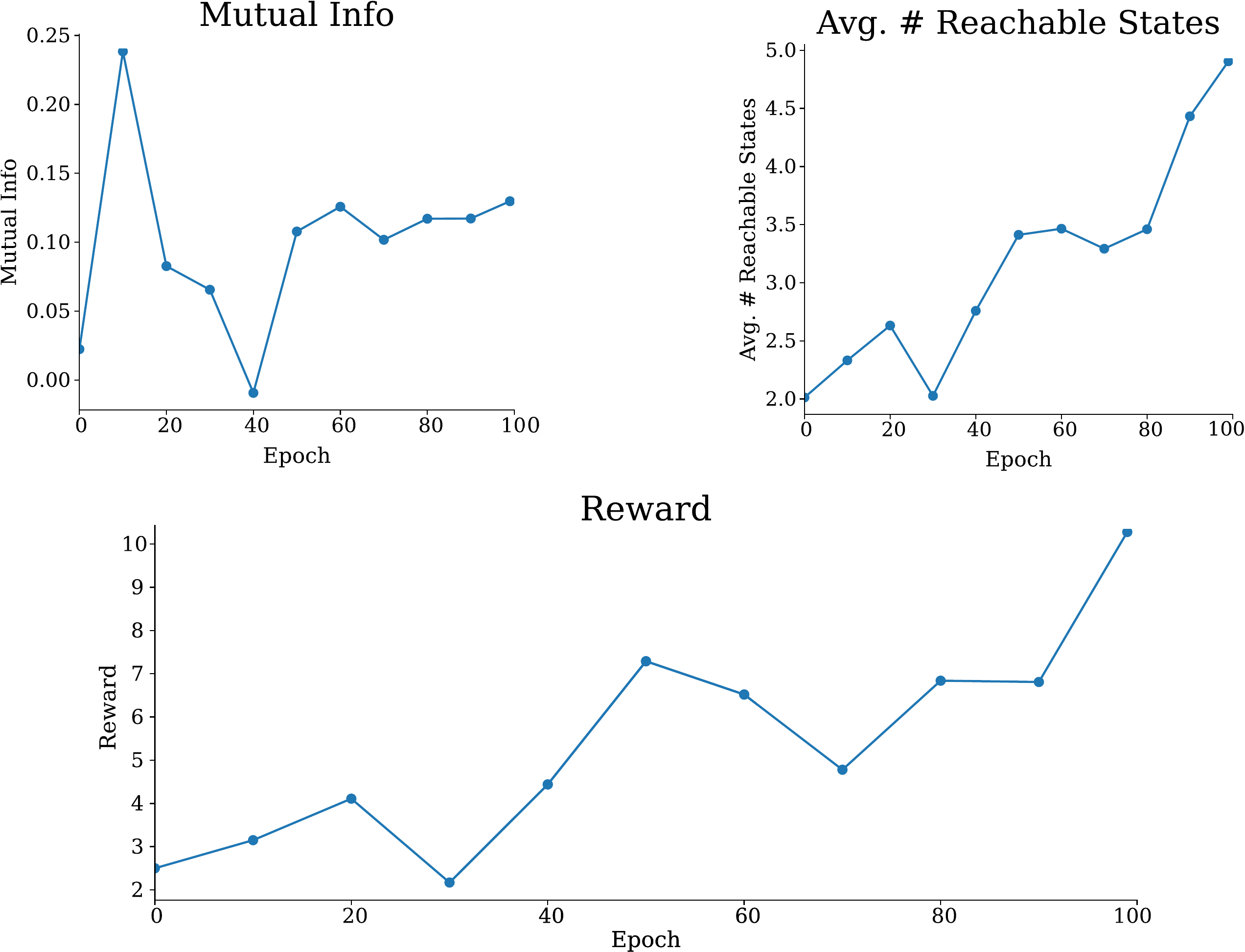}
    \caption{Visualizing training empowerment in a 5x5 Gridworld with 10 obstacles.
                Our empowerment objective maximizes the influence of the human's actions on the future state, preferring the state where the human can reach the goal to the trapped state.
        This corresponds to maximizing the volume of the state marginal polytope, which is proportional to the number of states that the human can reach from their current position.
        To visualize the representations, we set the latent dimension to 3 instead of 100.
    }
    \label{fig:grid5_appendix}
\end{figure}

\endgroup

\section{Additional Ablations and Qualitative Results}
\label{app:additional_results}

In this section we evaluate additional ablations and qualitative results for the \alg~method.

\subsection{Learning Representations without the Robot Action}
\label{app:noar}

In our estimation of empowerment (\refas{Eq.}{eq:contrastive}) we supply the robot action $\ar$ when learning both $\phi$ and $\phi$, however, the theoretical empowerment formulation in \cref{sec:analysis} does not require it.

To evaluate the impact of including $\ar$, we run an additional ablation without it on the gridworld environment, shown in \cref{fig:gridnoar}. This ablation shows that the inclusion of $\ar$ is most impactful in more challenging (higher number of boxes) environments. We hypothesize that conditioning the representations on the robot action reduces the noise in the mutual information estimation, and also reduces the difficulty of classifying true future states.

\section{Greedy Empowerment Policy}
All of our experiments have used Soft-Q learning to learn a policy from the empowerment estimation. Here, we additionally study a greedy empowerment policy which takes the most empowering action at each step. We model this by setting the q-learning gamma to 0 to fully discount future rewards.

Results for this ablation are shown in \cref{fig:gridgreedy}. Unsurprisingly, the greedy optimization vastly underperforms the policy with $\gamma=0.9$.

\subsection{\alg~Training Example}
\label{app:training}

In \cref{fig:grid5_appendix}, we show the mutual information during training of the \alg~agent in the gridworld environment with 5 obstacles.
he mutual information quickly becomes positive and remains so throughout training.
As long as the mutual information is positive, the classifier is able to reward the agent for taking actions that empower the human.

\section{Simplifying the Objective}
\label{app:simplify_objective}
The reward function in \cref{eq:reward} is itself a random variable because it depends on future states $g$. This subsection describes how this randomness can be removed. To do this, we follow prior work~\citep{wang2020understanding,eysenbach2024inference} in arguing that the learned representations $\psi(g)$ follow a Gaussian distribution:
\begin{assumption} [Based on ~\citet{wang2020understanding}] \label{assumption:wangisola}
    The representations of future states $\psi(g)$ learned by contrastive learning have a marginal distribution that is Gaussian:
    \begin{align}
        \p(\psi) = \int \p(g) \delta(\psi = \psi(g)) \dd g \stackrel{d}{=} \gN(0, I).
    \end{align}
\end{assumption}

With this assumption, we can remove the random sampling of $g$ from the reward function. We start by noting that the learned representations tell us the \emph{relative} likelihood of seeing a future state \cref{eq:contrastive}). \Cref{assumption:wangisola} will allow us to convert these relative likelihoods into likelihoods.
\begin{align*}
    \E_{\p(\sfut \mid s, \ar, \ah)}[r(s, \ar)]
    &= \E_{\p(\sfut)}\biggl[ \tfrac{\p(\sfut \mid s, \ar, \ah)}{\p(\sfut)} r(s, \ar) \biggr] \\
    &= \E_{\p(\sfut)}\Bigl[ C_1 e^{\phi(s, \ar, \ah)^T\phi(\sfut)} r(s, \ar) \Bigr] \\
    &= C_1 \E_{\psi \sim \p(\phi(\sfut))}\Bigl[ e^{\phi(s, \ar, \ah)^T\psi} (\phi(s, \ar, \ah) -
        \phi(s, \ar))^T \psi \Bigr] \\
    &= C_1 \bigl(\phi(s, \ar, \ah) - \phi(s, \ar)\bigr)^T \nonumber\\*
        &\mspace{100mu} \int \tfrac{1}{(2 \pi)^{d/2}} e^{-\frac{1}{2} \|\psi\|_2^2 + \phi(s, \ar,
        \ah)^T\psi} \psi \dd \psi \\
    &= C_1 \bigl(\phi(s, \ar, \ah) - \phi(s, \ar)\bigr)^T e^{\frac{1}{2} \|\phi(s, \ar, \ah)\|_2^2} \\
    &\qquad \qquad \int \tfrac{1}{(2 \pi)^{d/2}} e^{-\frac{1}{2} \|\psi\|_2^2 + \phi(s, \ar, \ah)^T\psi
        - \frac{1}{2} \|\phi(s, \ar, a ^H)\|_2^2} \psi \dd \psi \\
    &= C_1 \bigl(\phi(s, \ar, \ah) - \phi(s, \ar)\bigr)^T \nonumber\\*
        &\mspace{100mu} e^{\frac{1}{2} \|\phi(s, \ar, \ah)\|_2^2} \E_{\psi \sim \gN(\mu = \phi(s, \ar,
        \ah), \Sigma = I)} \bigl[\psi \bigr] \\
    &= C_1 e^{\frac{1}{2} \|\phi(s, \ar, \ah)\|_2^2} \bigl(\phi(s, \ar, \ah) - \phi(s,
        \ar)\bigr)^T\phi(s, \ar, \ah).
        \eqmark\label{eq:reward-simplified}
\end{align*}

This simplification may be attractive in in cases where the computed empowerment bonuses have high variance, or when the empowerment horizon is large (i.e., $\gamma\to 1$, as in \cref{sec:analysis}). Empirically, we found this version of the objective to be less effective in practice due to the additional representation structure required by \cref{assumption:wangisola}.

\ifpreprint\else
\clearpage
\input{checklist}
\fi

\end{document}